\renewcommand{\Box}{\operatorname{Box}}
\newcommand{\QA}{\operatorname{QA}}
\newtheorem{proposition}{Proposition}[]
\newtheorem{lemma}{Lemma}[]
\newtheorem{definition}{Definition}[]
\title{Shrinking Embeddings for Hyper-Relational Knowledge Graphs}
\author{%
    Bo Xiong\thanks{Correspondence to \url{bo.xiong@ipvs.uni-stuttgart.de}}, \\
    University of Stuttgart\\
   \And
    Mojtaba Nayyeri\\
    University of Stuttgart\\
     \And
    Shirui Pan \\
    Griffith University \\
    \And
    Steffen Staab\\
    University of Stuttgart \\
    University of Southampton \\
}
\begin{document}
\maketitle
\begin{abstract}
Link prediction on knowledge graphs (KGs) has been extensively studied on binary relational KGs, wherein each fact is represented by a triple. 
A significant amount of important knowledge, however, is represented by hyper-relational facts where each fact is composed of a primal triple and a set of qualifiers comprising a key-value pair that allows for expressing more complicated semantics.
Although some recent works have proposed to embed hyper-relational KGs, these methods fail to capture essential inference patterns of hyper-relational facts such as qualifier monotonicity, qualifier implication, and qualifier mutual exclusion, limiting their generalization capability. 
To unlock this, we present \emph{ShrinkE}, a geometric hyper-relational KG embedding method aiming to explicitly model these patterns. ShrinkE models the primal triple as a spatial-functional transformation from the head into a relation-specific box. 
Each qualifier ``shrinks'' the box to narrow down the possible answer set and, thus, realizes qualifier monotonicity. 
The spatial relationships between the qualifier boxes allow for modeling core inference patterns of qualifiers such as implication and mutual exclusion. Experimental results demonstrate ShrinkE's superiority on three benchmarks of hyper-relational KGs. 
\end{abstract}

\section{Introduction}
Link prediction on knowledge graphs (KGs) is a central problem for many KG-based applications \citep{zhang2016collaborative,lukovnikov2017neural,DBLP:conf/sigir/jiayinglu,DBLP:conf/semweb/XiongPTNS22,DBLP:conf/wsdm/00010ZZMHK22}. 
Existing works \cite{DBLP:conf/iclr/SunDNT19,DBLP:conf/nips/BordesUGWY13} have mostly studied link prediction on binary relational KGs, wherein each fact is represented by a triple, e.g., (\emph{Einstein}, \emph{educated\_at}, \emph{University of Zurich}). 
In many popular KGs such as Freebase \citep{bollacker2007freebase}, however, a lot of important knowledge is not only expressed in triple-shaped facts, but also via facts about facts, which taken together are called hyper-relational facts. 
For example, ((\emph{Einstein}, \emph{educated\_at}, \emph{University of Zurich}), \{(\emph{major}:\emph{physics}), (\emph{degree}:\emph{PhD})\}) is a hyper-relational fact, where the primary triple (\emph{Einstein}, \emph{educated\_at}, \emph{University of Zurich}) is contextualized by a set of key-value pairs \{(\emph{major}:\emph{physics}),(\emph{degree}:\emph{PhD})\}. 
Like much other related work, we follow the terminology established for Wikidata \citep{vrandevcic2014wikidata} and use the term \emph{qualifiers} to refer to the key-value pairs.\footnote{Synonyms include statement-level \emph{metadata} in RDF-star \citep{rdf-star} and triple \emph{annotation} in provenance communities \citep{green2007provenance}.} 
The qualifiers play crucial roles in avoiding ambiguity issues. For instance, \emph{Einstein} was \emph{educated\_at} several universities and the qualifiers for \emph{degree} and \emph{major} help distinguish them. 
\par
In order to predict links in hyper-relational KGs, pioneering works represent each hyper-relational fact as either an $n$-tuple in the form of $r(e_1,e_2,\cdots,e_n)$ \citep{DBLP:conf/ijcai/WenLMCZ16,DBLP:conf/www/ZhangLMM18, DBLP:conf/ijcai/FatemiTV020,DBLP:conf/www/0016Y020,boxE} or a set of key-value pairs in the form of $\{(k_i:v_i)\}_{i=1}^m$ \citep{DBLP:conf/www/GuanJWC19,guan2021link,DBLP:conf/www/LiuYL21}. However, these modelings lose key structure information and are incompatible with the RDF-star schema \cite{rdf-star} used by modern KGs, where both primal triples and qualifiers constitute the fundamental data structure. 
Recent works \citep{DBLP:conf/acl/GuanJGWC20,DBLP:conf/www/RossoYC20} represent each hyper-relational fact as a primary triple coupled with a set of qualifiers that are compatible with RDF-star standards \citep{rdf-star}. 
Link prediction is then achieved by modeling the validity of the primary triple and its compatibility with each annotated qualifier \citep{DBLP:conf/acl/GuanJGWC20, DBLP:conf/www/RossoYC20}. More complicated graph encoders and decoders \citep{DBLP:conf/emnlp/GalkinTMUL20,DBLP:journals/corr/abs-2104-08167,DBLP:conf/acl/WangWLZ21,DBLP:journals/corr/abs-2208-14322} are proposed to further boost the performance. However, they require a relatively huge number of parameters that make them prone to overfitting. 

To encourage generalization capability, KG embeddings should be able to model inference patterns, i.e., specifications of logical properties that may exist in KGs, which, if learned, empowers further principled inferences \cite{boxE}. 
This has been extensively studied for binary relational KG embeddings \cite{DBLP:conf/icml/TrouillonWRGB16,DBLP:conf/iclr/SunDNT19} but ignored for hyper-relational KGs in which not only primal triples but also qualifiers matter. 
One of the most important properties is qualifier monotonicity.
Given a query, the answer set shrinks or at least does not expand as more qualifiers are added to the query expression. 
For example, a query $\left(\emph{Einstein}, \emph{educated\_at}, ?x\right)$ with a variable $?x$ corresponds to two answers $\{\emph{University of Zurich}, \emph{ETH Zurich}\}$, but a query $\left(\left(\emph{Einstein}, \emph{educated\_at},?x\right),\{\left(\emph{degree}:\emph{B.Sc.}\right)\} \right)$ extended by a qualifier for \emph{degree} will only respond with $\{\emph{ETH Zurich}\}$. Besides, different qualifiers might form logical relationships that the model must respect during inference including qualifier implication (e.g., adding a qualifier that is implicitly implied in the existing qualifiers does not change the truth of a fact) and qualifier mutual exclusion (e.g., adding any two mutually exclusive qualifiers to a fact leads to a contradiction). 
\par
In light of this, we propose ShrinkE, a hyper-relational embedding model that allows for modeling these inference patterns. 
ShrinkE embeds each entity as a point and models a primal triple as a spatio-functional transformation from the head entity to a relation-specific box that entails the possible tails. Each qualifier is modeled as a shrinking of the primal box to a qualifier box. 
The shrinking of boxes simulates the ``monotonicity'' of hyper-relational qualifiers, i.e., attaching qualifiers to a primal triple may only narrow down but never enlarges the answer set. 
The plausibility of a given fact is measured by a point-to-box function that judges whether the tail entity is inside the intersection of all qualifier boxes.
Moreover, since each qualifier is associated with a box, the spatial relationships between the qualifier boxes allow for modeling core inference patterns such as qualifier implication and mutual exclusion. We theoretically show the capability of ShrinkE on modeling various inference patterns including (fact-level) monotonicity, triple-level, and qualifier-level inference patterns. Empirically, ShrinkE achieves competitive performance on three benchmarks.

\begin{figure*}[t!]
    \centering
    \subfloat[\centering ]{{\includegraphics[width=.46\linewidth]{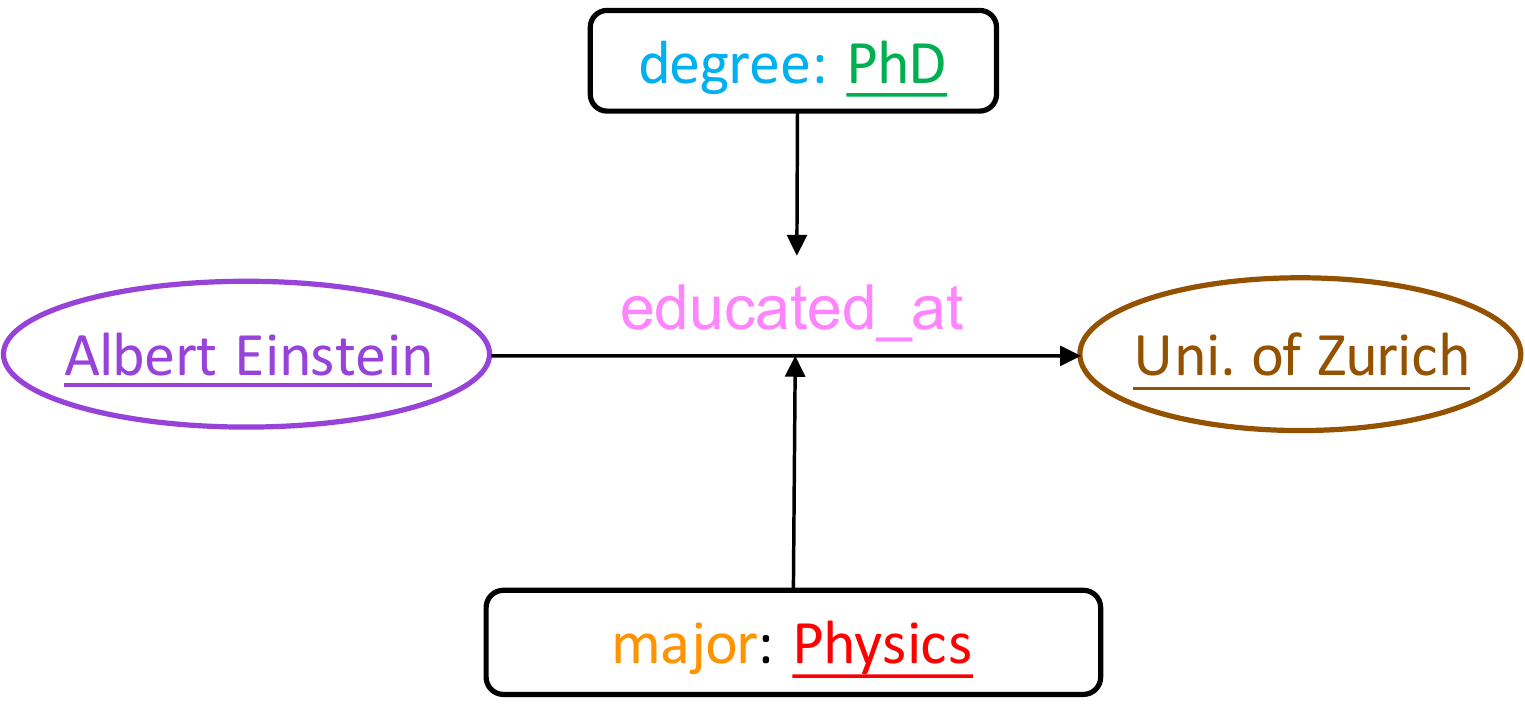}}}
    \subfloat[\centering ]{{\includegraphics[width=.54\linewidth]{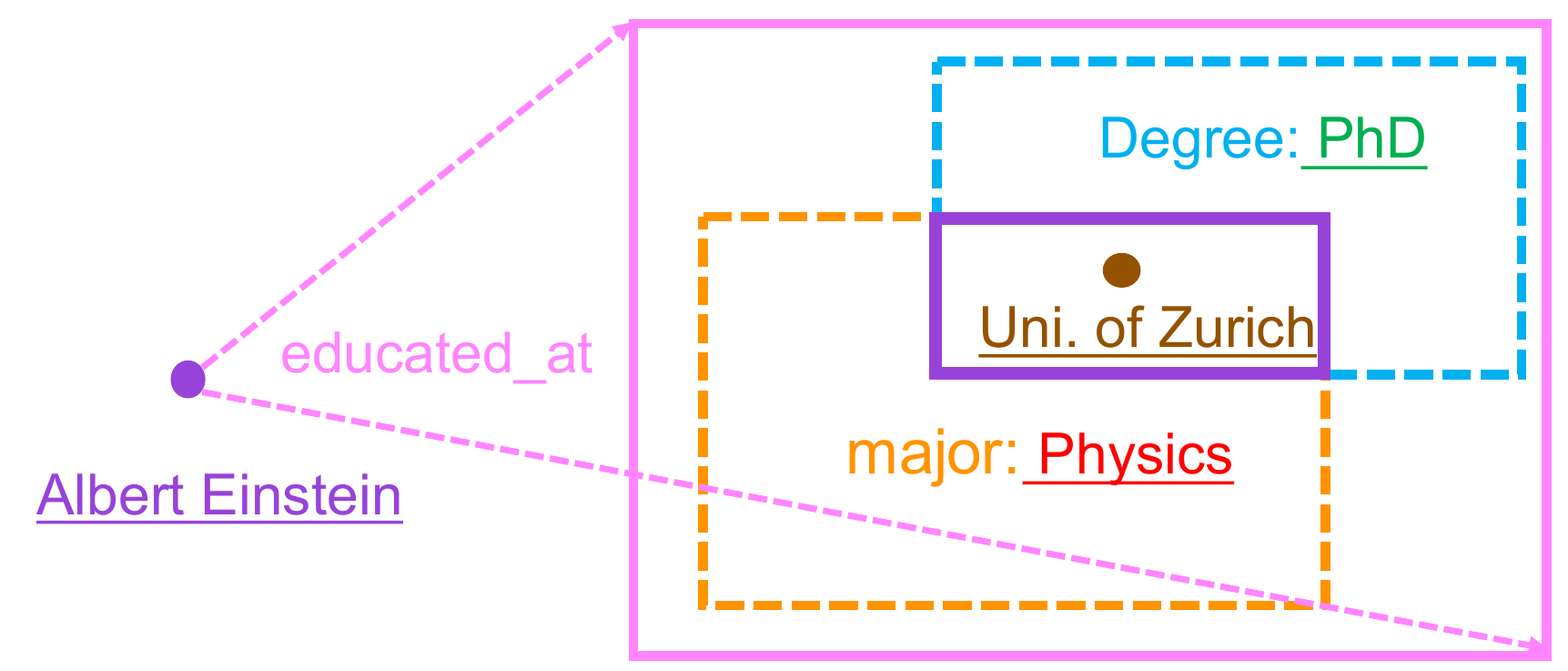} }}
    \caption{An illustration of the proposed idea. (a) A hyper-relational fact is composed of a primal triple and two key-value qualifiers, in which entities (values) are \underline{underlined} while relations (keys) are not.  (b) An illustration of the proposed hyper-relational KG embedding model ShrinkE. ShrinkE models the primal triple as a relation-specific transformation from the head entity to a query box (\emph{purple}) that entails the possible answer entities. Each qualifier is modeled as a shrinking of the query box (\emph{orange} and \emph{cyan}) such that the shrinking box is a subset of the query box. The shrinking of the box can be viewed as a geometric interpretation of the monotonicity assumption that we follow. 
    The final answer entities are supposed to be in the intersection box of all shrinking boxes. 
    } 
    \label{fig:example_bigmap}
\end{figure*}

\section{Related Work}

Related works on hyper-relational KG embeddings can be categorized by their representations of facts. Prominent representations include tuple, key-value pairs, and triple+key-value pairs. 

\paragraph{Tuple based} 
Pioneering works view a hyper-relational fact as an $n$-tuple, a.k.a.\ n-ary fact, consisting of a single abstract relation $r$ and its $n$ values, i.e., $r(v_1,v_2,\cdots, v_n)$. 
Functional models represent the tuple-based facts by functional mapping. For example, m-TransH \citep{DBLP:conf/ijcai/WenLMCZ16}, a generalization of TransH \citep{DBLP:conf/aaai/WangZFC14} to hyper-relational facts, projects all entities onto a relation-specific hyperplane and measures the plausibility as the weighted sum of projected embeddings. 
RAE \citep{DBLP:conf/www/ZhangLMM18} improves m-TransH by further modeling the \emph{relatedness} of values. Multilinear models generalize bilinear models to hyper-relational facts via multi-linear products. 
For example, HsimplE \citep{DBLP:conf/ijcai/FatemiTV020}, m-CP \citep{DBLP:conf/ijcai/FatemiTV020}, and GETD \citep{DBLP:conf/www/0016Y020} generalize SimplE \citep{DBLP:conf/nips/Kazemi018}, Canonical Polyadic (CP) decomposition \cite{trouillon2017knowledge}, and TuckER \citep{DBLP:conf/emnlp/BalazevicAH19}, respectively. GETD only applies to KGs with single-arity relations \citep{liu2021role} and S2S \citep{DBLP:conf/www/DiYC21} extends it to support mixed arity facts. HypE \citep{DBLP:conf/ijcai/FatemiTV020} encodes hyper-relational facts by positional convolutional filters and evaluates the facts' plausibility using the multilinear product. However, these models ignore the semantics of relations and loosely represent a combination of all relations of the original fact \citep{DBLP:conf/emnlp/GalkinTMUL20}. 

\paragraph{Key-value pairs} NaLP \citep{DBLP:conf/www/GuanJWC19} view each hyper-relational fact as a set of key-value pairs, i.e., $\{(k_i:v_i)\}_{i=1}^m$. Convolutional networks are employed to encode the key-value pairs, followed by a multi-layer perceptron (MLP) that measures the compatibility between the key and its values. 
RAM \citep{liu2021role} further models the relatedness between different keys and the relatedness between a key and all involved values. 
NaLP+ \citep{guan2021link} improves NaLP by considering type information. 
However, the key-value-based modeling treats all key-value pairs equally and does not distinguish primal triples from qualifiers.  

\paragraph{Triple+key-value pairs} NeuInfer \citep{DBLP:conf/acl/GuanJGWC20} and HINGE \citep{DBLP:conf/www/RossoYC20} represent a hyper-relational fact as a primary triple combined with a set of the key-value form of qualifiers, i.e., $\left(\left(h,r,t\right), \left\{\left(k_i:v_i\right)\right\} _{i=1}^m \right)$, which is compatible with the RDF-star standard \citep{delva2021rml} used in modern KGs.  
Both methods adopt neural networks to obtain the fact validity by measuring the validity of the primary triple and its compatibility with each qualifier. 
NeuInfer applies MLP while HINGE uses a convolutional network as an encoder. 
StarE \citep{DBLP:conf/emnlp/GalkinTMUL20} leverages a message passing network, CompGCN \citep{DBLP:conf/iclr/VashishthSNT20}, as an encoder to obtain the relation and entity embeddings, which are then fed into a transformer decoder to obtain the validity of facts. Hy-Transformer \citep{DBLP:journals/corr/abs-2104-08167}, GRAN \citep{DBLP:conf/acl/WangWLZ21} and QUAD \citep{DBLP:journals/corr/abs-2208-14322} further improve it with alternative designs of encoders and via auxiliary training tasks. Relatively, these models, though useful, require a large number of parameters and are prone to overfitting. 


\section{Preliminaries}
We view a hyper-relational fact in the form of a primal triple coupled with a set of qualifiers. 

\begin{definition}[Hyper-relational fact]
Let $\mathcal{E}$ and $\mathcal{R}$ denote the sets of entities and relations, respectively. A hyper-relational fact $\mathcal{F}$ is a tuple $(\mathcal{T}, \mathcal{Q})$, where $\mathcal{T}=(h, r, t),\ h, t \in \mathcal{E}, r \in \mathcal{R} $ is a primal triple and $\mathcal{Q}=\left\{\left(k_{i}: v_{i}\right)\right\}_{i=1}^{m} \ k_{i} \in \mathcal{R}, v_{i} \in \mathcal{E}$ is a set of qualifiers.
We call the number of involved entities in $\mathcal{F}$, i.e., $(m+2)$, the arity of the fact. 
\end{definition}
A hyper-relational fact reduces to a triple/binary fact when $m=0$. When $m>0$, each qualifier can be viewed as an auxiliary description that contextualizes or specializes the semantics of the primal triple. 
In typical open-world settings, facts with the same primal triple might have different numbers of qualifiers. 
To characterize this property, we introduce the concepts of partial fact and qualifier monotonicity in hyper-relational KGs.

\begin{definition}[Partial fact \citep{DBLP:conf/acl/GuanJGWC20}]
Given two facts $\mathcal{F}_1=\left(\mathcal{T},\mathcal{Q}_1\right)$ and $\mathcal{F}_2=\left(\mathcal{T},\mathcal{Q}_2\right)$ that share the same primal triple. We call $\mathcal{F}_1$ a partial fact of $\mathcal{F}_2$ iff $\mathcal{Q}_1 \subseteq \mathcal{Q}_2$. 
\end{definition}

In this paper, we follow the monotonicity assumption by restricting the model to respect the monotonicity property.\footnote{
Some kinds of qualifiers may represent semantically opaque contexts. For instance, ((\emph{Crimea}, \emph{belongs\_to}, \emph{Russia}), \{(\emph{said\_by}, \emph{Putin})\}) does not imply the primary triple and should therefore be excluded. } For this purpose, we consider the monotonicity of query and inference.

\begin{definition}[Qualifier monotonicity]
Let $\QA(\cdot)$ denote a query answering model taking a query and a KG as input and outputting the set of answer entities. Given any pair of queries $q_1=\left(\left(h,r,x?\right), \mathcal{Q}_1\right)$ and $q_2=\left(\left(h,r,x?\right), \mathcal{Q}_2\right)$ that share the same primal triple and $\mathcal{Q}_1 \subseteq \mathcal{Q}_2$, qualifier monotonicity is given iff,
\begin{equation}
\QA(q_2; \operatorname{KG}) \subseteq \QA(q_1;\operatorname{KG}).
\end{equation}
\end{definition}
Qualifier monotonicity implies that attaching any qualifiers to a query does not enlarge the answer set of the possible tail entities, and inversely, removing the qualifiers from a query can only return more possible tail entities. This implies that if a fact is true, then all its partial facts must also be true (a.k.a. weakening of inference rule), i.e.,
\begin{equation}
    \left(\mathcal{T},\mathcal{Q}_1\right)  \wedge (\mathcal{Q}_2 \subseteq \mathcal{Q}_1) \rightarrow \left(\mathcal{T},\mathcal{Q}_2\right).
    \label{eq:momonotonicity}
\end{equation}

\section{Shrinking Embeddings for Hyper-Relational KGs }

We aim to design a scoring function $f(\cdot)$ taking the embeddings of facts as input so that the output values respect desired logical properties. 
To this end, we introduce primal triple embedding and qualifier embedding, respectively.

\subsection{Primal Triple Embedding}
We represent each entity as a point $\mathbf{e} \in \mathbb{R}^d$. Each primal relation $r$ is modeled as a spatio-functional transformation $\mathcal{B}_r:\mathbb{R}^d \rightarrow \Box(d)$ that maps the head $\mathbf{e}_h \in \mathbb{R}^d$ to a $d$-dimensional box in $\Box(d)$ with $\Box(d)$ being the set of boxes in $\mathbb{R}^d$. 
Each box can be parameterized by a lower left point $\mathbf{m} \in \mathbb{R}^d$ and an upper right point $\mathbf{M} \in \mathbb{R}^d$, given by
\begin{equation}
\begin{split}
    &\Box^d(\mathbf{m}, \mathbf{M}) = \\
    &\{\mathbf{x} \in \mathbb{R}^d \mid \mathbf{m}_i \leq \mathbf{x}_i \leq \mathbf{M}_i, \: i=1,\cdots,d\}.
\end{split}
\end{equation}
We leave the superscript of $\Box^d$ away if it is clear from context. 
and call the transformed box a query box. Intuitively, all points in the query box correspond to the possible answer tail entities. Hence, the query box can be viewed as a geometric embedding of the answer set. 
Note that a query could result in an empty answer set. In order to capture such property, we do not exclude empty boxes that correspond to queries with empty answer set. Empty boxes are covered by the cases where there exists a dimension $i$ such that $\mathbf{m}_i \geq \mathbf{M}_i$. 

\paragraph{Point-to-box transform}
The spatio-functional point-to-box transformation $\mathcal{B}$ is composed of a relation-specific point transformation $\mathcal{H}_r:  \mathbb{R}^d \rightarrow \mathbb{R}^d$ that transforms the head point $\mathbf{e}_h$ to a new point, and a relation-specific spanning that spans the transformed point to a box, formally given by
\begin{equation}
    \mathcal{B}_r(\mathbf{e}_h) = \Box( \mathcal{H}_r(\mathbf{e}_h) - \tau(\mathbf{\boldsymbol\delta}_r),  \mathcal{H}_r(\mathbf{e}_h) + \tau(\mathbf{\boldsymbol\delta}_r)),
    \label{eq:span}
\end{equation}
where $\mathbf{\boldsymbol\delta}_r \in \mathbb{R}^n$ is a relation-specific spanning/offset vector, and $\tau_{t}(\mathbf{x})=t \log \left(1+e^{\mathbf{x}/t}\right)$ with $t$ being a temperature hyperparameter, is a \emph{softplus} function that enforces the spanned box to be non-empty. 

The point transformation function $\mathcal{H}_r$ could be any functions that are used in other KG embedding models such as translation used in TransE \cite{DBLP:conf/nips/BordesUGWY13} and rotations used in RotatE \cite{DBLP:conf/iclr/SunDNT19}. 
Hence, our model is highly flexible and effective at embedding primal triples. To allow for capturing multiple triple-level inference patterns such as symmetry, inversion, and composition, we combine translation and rotation, and formulate $\mathcal{H}_r$ as
\begin{equation}
    \mathcal{H}_r(\mathbf{e}_h) = \Theta_{r} \mathbf{e}_h + \mathbf{b}_{r}
\end{equation}
where $\Theta_{r}$ is a rotation matrix and $\mathbf{b}_{r}$ is a translation vector.
We parameterize the rotation matrix by a block diagonal matrix $ \Theta_{r} =\operatorname{diag}\left(\mathbf{G}\left(\theta_{r, 1}\right), \ldots, \mathbf{G}\left(\theta_{r, \frac{d}{2}}\right)\right)$, 
where 
\begin{equation}
    \mathbf{G}(\theta)=\left[\begin{array}{cc}
    \cos (\theta) &  \sin (\theta) \\
    \sin (\theta) &  \cos (\theta)
\end{array}\right].
\end{equation}
\paragraph{Point-to-box distance}
The validity of a primal triple $(h,r,t)$ is then measured by judging whether the tail entity point $\mathbf{e}_t$ is geometrically inside of the query box. 
Given a query box $\Box^n(\mathbf{m}, \mathbf{M})$ and an entity point $\mathbf{e} \in \mathbb{R}^{d}$, we denote the center point as $\mathbf{c}=\frac{\mathbf{m} + \mathbf{M}}{2}$. Let $|\cdot|$ denote the L1 norm and $\max()$ denote an element-wise maximum operation. The point-to-box distance is given by
\begin{equation}
\begin{split}
    &D(\mathbf{e}, \Box(\mathbf{m}, \mathbf{M})) =  \frac{|\mathbf{e}-\mathbf{c}|_1}{|\max(\mathbf{0}, \mathbf{M}-\mathbf{m})|_1} \\
    &+ \left(|\mathbf{e}-\mathbf{m}|_1 + |\mathbf{e}-\mathbf{M}|_1  - |\max(\mathbf{0}, \mathbf{M}-\mathbf{m})|_1 \right)^2.
\end{split}
\end{equation}
Fig. \ref{fig:point2box} visualizes the distance function. Intuitively, in cases where the point is in the query box, the distance grows relatively slowly and inversely correlates with the box size. In cases where the point is outside the box, the distance grows fast. 

\begin{figure}[t!]
    \centering
    \includegraphics[width=\linewidth]{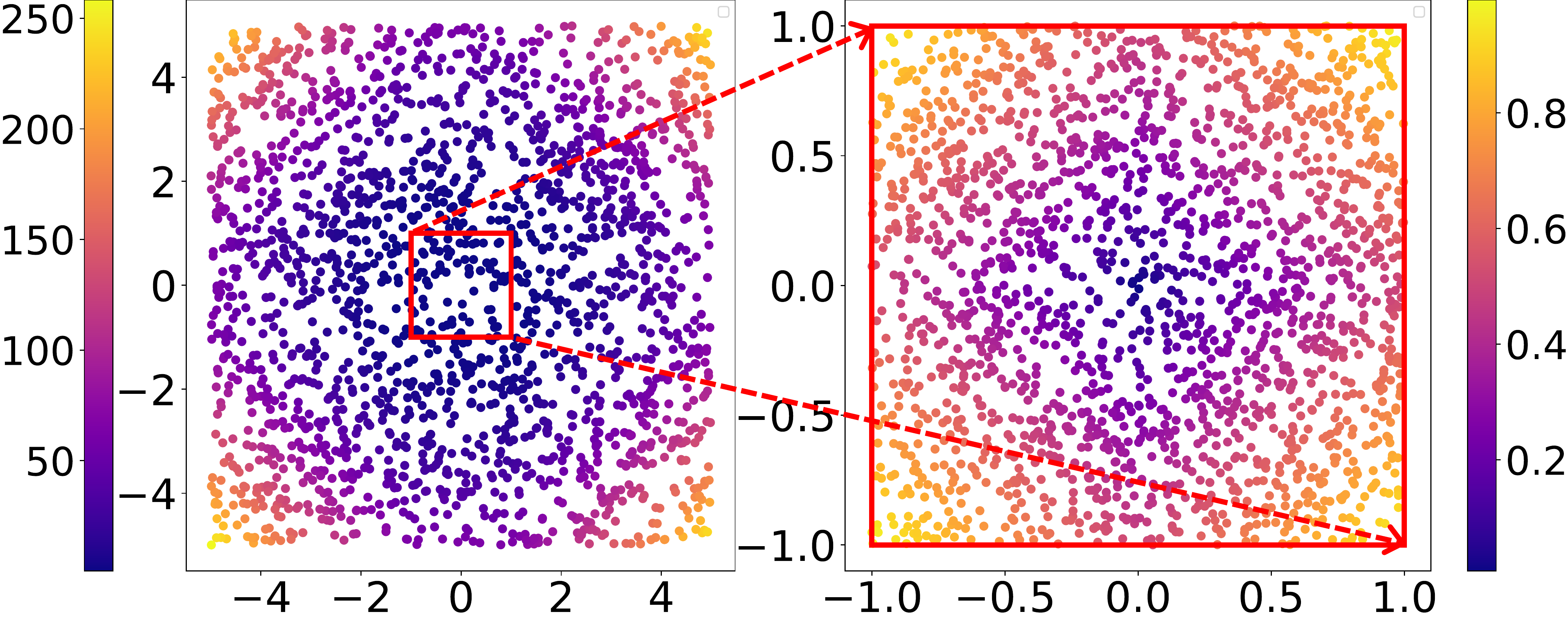}
    \caption{An illustration of the point-to-box distance. The distance (visualized by color maps) grows slowly when the point is inside of the box (\emph{right}) while growing faster when the point is outside of the box (\emph{left}). } 
    \label{fig:point2box}
\end{figure}

\subsection{Qualifier Embedding}

Conceptually, qualifiers add information to given primary facts potentially allowing for additional inferences, but never for the retraction of inferences, reflecting the monotonicity of the representational paradigm. 
Corresponding to the non-declining number of inferences, the number of possible models for this representation shrinks, which can be intuitively reflected by a reduced size of boxes incurred by adding qualifiers. 

\paragraph{Box Shrinking}
To geometrically mimic this property in the embedding space, we model each qualifier $(k:v)$ as a "shrinking" of the query box.
Given a box $\Box(\mathbf{m}, \mathbf{M})$, a shrinking is defined as a box-to-box transformation $\mathcal{S}: \Box \rightarrow \Box$ that potentially shrinks the volume of the box while not moving the resulting box outside of the source box. Let $\mathbf{L} = \left(\mathbf{M}-\mathbf{m}\right)$ denote the side length vector, box shrinking is defined by
\begin{equation}\normalsize
\begin{split}
&\mathcal{S}_{r,k,v}\left( \Box\left(\mathbf{m}, \mathbf{M}\right) \right) =\\
&\Box \left( \mathbf{m} + 
\sigma\left(\mathbf{s}_{r,k,v}\right) \odot \mathbf{L} ,
\mathbf{M} - \sigma\left(\mathbf{S}_{r,k,v}\right) \odot \mathbf{L} \right),
\end{split}
\end{equation}
where $\mathbf{s}_{r,k,v} \in \mathbb{R}^n$ and $\mathbf{S}_{r,k,v} \in \mathbb{R}^n$ are the "shrinking" vectors for the lower left corner and the upper right corner, respectively. $\sigma$ is a \emph{sigmoid} function and $\odot$ is element-wise vector multiplication. 
The resulting box, including the case of empty box, is always inside the query box, i.e.,  $\mathcal{S}( \Box^n(\mathbf{m}, \mathbf{M}) ) \subseteq \Box^n(\mathbf{m}, \mathbf{M})$, which exactly resembles the qualifier monotonicity.

We use $r,k,v$ as the indices of the shrinking vectors because the shrinking of the box should depend on the relatedness between the primal relation and the qualifier. 
For example, if a qualifier $({\emph{degree}:\emph{bachelor}})$ is highly related to the primal relation $\emph{educated\_at}$, the scale of the shrinking vectors should be small as it adds a weak constraint to the triple. 
If the qualifier is unrelated to the primal relation, e.g., $({\emph{degree}:\emph{bachelor}})$ and $\emph{born\_in}$, the shrinking might even enforce an empty box. 

To learn the shrinking vectors, we leverage an MLP layer that takes the primal relation and key-value qualifier as input and outputs the shrinking vectors defined by $\mathbf{s}_{r,k,v}, \mathbf{S}_{r,k,v} = \operatorname{MLP}\left(\operatorname{concat} \left( r_\theta, k_\theta, v_\theta \right) \right)$ where $r_\theta, k_\theta, v_\theta$ are the embeddings of $r,k,v$, respectively. 

\subsection{Scoring function and learning.}

\paragraph{Scoring function} 

The score of a given hyper-relational fact is defined by 
\begin{equation}
    f\left( \left( \left(h,r,t\right),\mathcal{Q}\right) \right) = D(\mathbf{e}_t, \Box_\mathcal{Q}(\mathbf{m}, \mathbf{M})),
\end{equation}
where $\Box_\mathcal{Q}(\mathbf{m}, \mathbf{M})$ denotes the target box that is calculated by the intersection of all shrinking boxes of the qualifier set $\mathcal{Q}$. 
The intersection of $n$ boxes can be calculated by taking the maximum of lower left points of all boxes and taking the minimum of upper right points of all boxes, given by
\begin{equation}\small
\begin{split}
    \mathcal{I}(\Box_1, \cdots, \Box_n)  = 
    \Box\left( \max_{i \in 1, \cdots,n} \mathbf{m}_i, \min_{i \in 1, \cdots,n} \mathbf{M}_i \right).
\end{split}
\end{equation}

Note that if there is no intersection between boxes, this intersection operation still works as it results in an empty box. The intersection of boxes is a permutation-invariant operation, implying that perturbing the order of qualifiers does not change the plausibility of the facts. 

\paragraph{Learning}
As a standard data augmentation strategy, we add reciprocal relations $\left(t^{\prime}, r^{-1}, h^{\prime}\right)$ for the primary triple in each hyper-relational fact.
For each positive fact in the training set, we generate $n_{\operatorname{neg}}$ negative samples by corrupting a subject/tail entity with randomly selected entities from $\mathcal{E}$. 
We adopt the cross-entropy loss to optimize the model via the Adam optimizer, which is given by
\begin{equation}\small
    \mathcal{L}=-\frac{1}{N} \sum_{i=1}^{N}\left(y_{i} \log \left(p_{i}\right)+ \sum_{i=1}^{n_{\operatorname{neg}}} \left(1-y_{i}\right) \log \left(1-p_{i}\right)\right),
\end{equation}
where $N$ denotes the total number of facts in the training set. $y_{i}$ is a binary indicator denoting whether a fact is true or not. $p_{i}=\sigma(f(\mathcal{F}))$ is the predicted score of a fact $\mathcal{F}$ with $\sigma$ being the \emph{sigmoid} function. 

\begin{table*}[ht]
    \centering
      \caption{Dataset statistics, where the columns indicate the number of all facts, hyper-relational facts with the number of qualifiers $m>0$, entities, relations, and facts in train/dev/test sets, respectively.}
      \resizebox{\linewidth}{!}{
    \begin{tabular}{cccccccccc}
\hline
& All facts & Higher-arity facts (\%) & Entities & Relations & Train & Dev & Test \\ 
\hline
JF17K & 100,947 & 46,320 (45.9\%) & 28,645 & 501 & 76,379 & – & 24,568 \\
WikiPeople & 382,229 & 44,315 (11.6\%) & 47,765 & 193 & 305,725 & 38,223 & 38,281 \\
WD50k & 236,507 & 32,167 (13.6\%) & 47,156 & 532 & 166,435 & 23,913 & 46,159 & \\
WD50K(33) & 102,107 & 31,866 (31.2\%) & 38,124 & 475 & 73,406 & 10,568 & 18,133\\
WD50K(66) & 49,167  & 31,696 (64.5\%) & 27,347 & 494 & 35,968 & 5,154 & 8,045\\
WD50K(100) & 31,314 & 31,314 (100\%)  & 18,792 & 279 & 22,738 & 3,279 & 5,297\\
\hline
    \end{tabular}}
    \label{tab:dataset}
\end{table*}

\section{Theoretical Analysis}
\label{sec:theorem}

Analyzing and modeling inference patterns is of great importance for KG embeddings because it enables generalization capability, i.e., once the patterns are learned, new facts that respect the patterns can be inferred. 
An inference pattern is a specification of a logical property that may exist in a KG, Formally, an inference pattern is a logical form $\psi \rightarrow \phi$ with $\psi$ and $\phi$ being the body and head, implying that if the body is satisfied then the head must also be satisfied. 

In this section, we analyze the theoretical capacity of ShrinkE for modeling inference patterns. All proofs of propositions are in Appendix \ref{app:proof}. 

\paragraph{Fact-level inference pattern (monotonicity)} The following proposition shows that ShrinkE is able to model monotonicity. 
\begin{proposition}
Given any two facts $\mathcal{F}_1=\left(\mathcal{T},\mathcal{Q}_1\right)$ and $\mathcal{F}_2=\left(\mathcal{T},\mathcal{Q}_2\right)$ where $\mathcal{Q}_2 \subseteq \mathcal{Q}_1$, i.e., $\mathcal{F}_2$ is a partial fact of $\mathcal{F}_1$, the output of the scoring function $f(\cdot)$ of ShrinkE satisfy the constraint $f(\mathcal{F}_2) \geq f(\mathcal{F}_1)$. 
\label{prop:qualifier_monotonicity}
\end{proposition}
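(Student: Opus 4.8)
The plan is to reduce Proposition~\ref{prop:qualifier_monotonicity} to one geometric fact about how the point-to-box distance $D$ behaves under box inclusion, and to obtain that inclusion from the construction of the qualifier boxes. First I would note that the two facts share the same primal query box $\Box_{0}=\mathcal{B}_{r}(\mathbf{e}_{h})$, and that every qualifier box is a sub-box of it: by the definition of box shrinking, $\mathcal{S}_{r,k,v}(\Box_{0})\subseteq\Box_{0}$, which holds for empty shrunk boxes too since the empty box is contained in everything. The target box $\Box_{\mathcal{Q}}$ is the intersection $\mathcal{I}$ of the shrunk boxes indexed by the pairs of $\mathcal{Q}$, and $\mathcal{I}$ is antitone in its index family: intersecting over more boxes takes a coordinate-wise maximum of more lower corners and a minimum of more upper corners, hence yields a coordinate-wise smaller box, empty or not. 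Since $\mathcal{Q}_{2}\subseteq\mathcal{Q}_{1}$, this gives $\Box_{\mathcal{Q}_{1}}\subseteq\Box_{\mathcal{Q}_{2}}\subseteq\Box_{0}$; the edge case $\mathcal{Q}_{2}=\emptyset$ just collapses $\Box_{\mathcal{Q}_{2}}$ to $\Box_{0}$, and a degenerate empty $\Box_{\mathcal{Q}_{1}}$ is absorbed by the same containment.

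It then suffices to show that for every point $\mathbf{e}$ and nested boxes $\Box(\mathbf{m}',\mathbf{M}')\subseteq\Box(\mathbf{m},\mathbf{M})$ the two values $D(\mathbf{e},\cdot)$ are ordered in the direction required by the statement, and then apply this with $\mathbf{e}=\mathbf{e}_{t}$, $\Box(\mathbf{m}',\mathbf{M}')=\Box_{\mathcal{Q}_{1}}$ and $\Box(\mathbf{m},\mathbf{M})=\Box_{\mathcal{Q}_{2}}$. I would split $D$ into its two summands. The quadratic ``outside'' term is the easy one: $|\mathbf{e}-\mathbf{m}|_{1}+|\mathbf{e}-\mathbf{M}|_{1}-|\max(\mathbf{0},\mathbf{M}-\mathbf{m})|_{1}$ decomposes coordinate-wise into $\sum_{i}\bigl(|\mathbf{e}_{i}-\mathbf{m}_{i}|+|\mathbf{e}_{i}-\mathbf{M}_{i}|-\max(0,\mathbf{M}_{i}-\mathbf{m}_{i})\bigr)$, and each summand is twice the distance from $\mathbf{e}_{i}$ to the interval $[\mathbf{m}_{i},\mathbf{M}_{i}]$, still non-negative (indeed larger) when that interval degenerates; for nested non-degenerate intervals the distance to the smaller interval dominates, so this term moves monotonically with the inclusion, and the degenerate cases are checked by hand in the same spirit. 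The first, normalized term $|\mathbf{e}-\mathbf{c}|_{1}/|\max(\mathbf{0},\mathbf{M}-\mathbf{m})|_{1}$ with $\mathbf{c}=(\mathbf{m}+\mathbf{M})/2$ is where the real work lies, because shrinking simultaneously shrinks the denominator and displaces the center. Here I would try to control both effects together — using that the shrunk center still lies inside $\Box(\mathbf{m},\mathbf{M})$ to compare the two numerators, and pairing the blow-up of the normalizing factor $1/|\mathbf{M}'-\mathbf{m}'|_{1}$ against the contained-box geometry — and, if necessary, exploit that a shrunk box is not an arbitrary sub-box but one produced by the sigmoid-parameterized corner updates, so that the sum of the two terms still moves the right way.

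The hard part will be exactly this first term. Step one is bookkeeping about intersections, and the quadratic term is a clean coordinate-wise monotonicity; but the normalized center distance is neither scale-invariant nor center-stable under shrinking, so the two summands of $D$ cannot be treated independently, and the estimate must track the change in box size, the displacement of the center, and the position of $\mathbf{e}$ relative to both boxes simultaneously, with careful handling of boundary and empty-box cases. I expect that combined estimate — rather than the set-theoretic containment — to be where the proof spends most of its effort.
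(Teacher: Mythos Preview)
Your reduction—first obtain $\Box_{\mathcal{Q}_1}\subseteq\Box_{\mathcal{Q}_2}$ from antitonicity of the intersection, then compare $D(\mathbf{e}_t,\cdot)$ on the two nested boxes—matches the paper's scaffold. The divergence is in how the distance comparison is carried out: the paper does \emph{not} decompose $D$ into its two summands, but instead performs a three-way case split on the position of $\mathbf{e}_t$ (inside the smaller box; between the two boxes; outside the larger box) and in each case appeals to a one-line qualitative property of $D$ (``monotone in the distance to the centre'' for the inner case, ``monotone in the box volume'' for the outer case, a direct comparison in the middle case). So the paper's decomposition is geometric, by region, whereas yours is algebraic, by term; your route isolates the quadratic summand cleanly as a coordinate-wise point-to-interval distance, while the paper's route trades that precision for brevity by asserting the needed monotonicity of $D$ as a whole.

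Your suspicion about the normalized summand is the real issue, and neither your fallback nor the paper's case split actually resolves it. An off-centre shrink can move the new centre onto $\mathbf{e}_t$ and send $|\mathbf{e}_t-\mathbf{c}'|_1/|\max(\mathbf{0},\mathbf{M}'-\mathbf{m}')|_1$ to $0$ while the same ratio for the enclosing box stays positive: in one dimension take $[-1,1]\supset[0.5,1]$ with $\mathbf{e}_t=0.75$, where $D$ equals $0.375$ for the large box and $0$ for the small one. Thus $D$ is \emph{not} monotone under arbitrary box inclusion, and your proposed escape hatch (``exploit that a shrunk box is not an arbitrary sub-box'') does not help either, because the sigmoid-parameterized update $\mathbf{m}+\sigma(\mathbf{s})\odot\mathbf{L},\ \mathbf{M}-\sigma(\mathbf{S})\odot\mathbf{L}$ with independent $\mathbf{s},\mathbf{S}$ can realize any sub-box of $\Box_0$. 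The paper's inner-case argument simply asserts the monotonicity at this point rather than verifying it; if you pursue your term-wise plan to the end you will hit a genuine obstruction here, not just a hard estimate.
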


\paragraph{Triple-level inference patterns} 
Prominent triple-level inference patterns include symmetry 
$(h,r,t) \rightarrow (t,r,h)$, anti-symmetry $(h,r,t) \rightarrow \neg (h,r,t)$, inversion $(h,r_1,t) \rightarrow (t,r_2,h)$, composition $(e_1,r_1,e_2) \wedge (e_2,r_2,e_3)  \rightarrow (e_1,r_3,e_3)$, relation implication $(h,r_1,t) \rightarrow (h,r_2,t)$, relation intersection $(h,r_1, t) \wedge (h, r_2, t) \rightarrow (h, r_3, t)$, and relation mutual exclusion $(h,r_1, t) \wedge (h, r_2, t) \rightarrow \bot $. All these triple-level inference patterns also exist in hyper-relational facts when their qualifiers are the same, e.g., hyper-relational symmetry means $\left(\left(h,r,t\right), \mathcal{Q}\right) \rightarrow \left(\left(t,r,h\right), \mathcal{Q}\right)$. 
Proposition \ref{prop:triple_inference_pattern} states that ShrinkE is able to infer all of them. 
\begin{proposition}
    ShrinkE is able to infer hyper-relational symmetry, anti-symmetry, inversion, composition, relation implication, relation intersection, and relation exclusion.  
\label{prop:triple_inference_pattern}
\end{proposition}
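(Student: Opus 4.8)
The plan is to reduce every hyper-relational instance of the seven patterns to a purely geometric statement about axis-aligned boxes, and then, pattern by pattern, exhibit an explicit choice of the relation parameters ($\Theta_{r_i}$, $\mathbf{b}_{r_i}$, $\boldsymbol\delta_{r_i}$, and the constant vector that the shrinking MLP should output) under which the body forces the head for \emph{every} assignment of entity embeddings and \emph{every} qualifier set $\mathcal{Q}$. As is standard for region-based models, I treat $((h,r,t),\mathcal{Q})$ as true exactly when $\mathbf{e}_t$ lies in its target box (the regime in which $D$ is minimal, cf.\ Fig.~\ref{fig:point2box}); the whole statement is then of the familiar ``there exists a parameterization'' type.

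The first step is the reduction. I would observe that $\mathcal{B}_r(\mathbf{e}_h)$ is the box centered at $\mathcal{H}_r(\mathbf{e}_h)=\Theta_r\mathbf{e}_h+\mathbf{b}_r$ with half-widths $\tau(\boldsymbol\delta_r)$ that do \emph{not} depend on $h$, and that box shrinking moves each face inward by a fraction $\sigma(\cdot)\in(0,1)$ of the current side length; hence $\mathcal{S}_{r,k,v}(\mathcal{B}_r(\mathbf{e}_h))=\mathcal{H}_r(\mathbf{e}_h)+\Box_{r,k,v}$ for a box $\Box_{r,k,v}$ depending only on $(r,k,v)$. Since $\bigcap_i(\mathbf{v}+\Box_i)=\mathbf{v}+\bigcap_i\Box_i$, the target box of $((h,r,t),\mathcal{Q})$ equals $\mathcal{H}_r(\mathbf{e}_h)+\Box_{r,\mathcal{Q}}$ with $\Box_{r,\mathcal{Q}}:=\bigcap_{(k,v)\in\mathcal{Q}}\Box_{r,k,v}$, so the fact is true iff $\mathbf{e}_t-\Theta_r\mathbf{e}_h-\mathbf{b}_r\in\Box_{r,\mathcal{Q}}$. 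Two auxiliary facts I would record and reuse: shrinking never enlarges, so $\Box_{r,\mathcal{Q}}\subseteq\Box_{r,\emptyset}=\Box(-\tau(\boldsymbol\delta_r),\tau(\boldsymbol\delta_r))$ for all $\mathcal{Q}$; and axis-aligned boxes are closed under translation, negation ($-\Box(\mathbf{a},\mathbf{b})=\Box(-\mathbf{b},-\mathbf{a})$) and Minkowski sum ($\Box(\mathbf{a}_1,\mathbf{b}_1)+\Box(\mathbf{a}_2,\mathbf{b}_2)=\Box(\mathbf{a}_1+\mathbf{a}_2,\mathbf{b}_1+\mathbf{b}_2)$).

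With this reduction each pattern is just a containment or disjointness requirement between boxes of the above form, which I would satisfy by always taking $\Theta_{r_i}=I$ (all rotation angles $0$), so $\mathcal{H}_r(\mathbf{e}_h)=\mathbf{e}_h+\mathbf{b}_r$, and arguing: (i) \emph{symmetry} — put $\mathbf{b}_r=\mathbf{0}$ and let the MLP output equal lower/upper shrinking vectors, making $\Box_{r,\mathcal{Q}}$ symmetric about $\mathbf{0}$, so $\mathbf{e}_t-\mathbf{e}_h\in\Box_{r,\mathcal{Q}}\Leftrightarrow\mathbf{e}_h-\mathbf{e}_t\in\Box_{r,\mathcal{Q}}$; (ii) \emph{anti-symmetry} (i.e.\ $(h,r,t)\to\neg(t,r,h)$ for $h\neq t$) — keep $\Theta_r=I$ but pick $\mathbf{b}_r$ with some coordinate exceeding $\tau(\boldsymbol\delta_r)$ there, which makes $\mathbf{b}_r+\Box_{r,\mathcal{Q}}$ and $-\mathbf{b}_r-\Box_{r,\mathcal{Q}}$ disjoint; (iii) \emph{inversion} — set $\mathbf{b}_{r_1}=\mathbf{b}_{r_2}=\mathbf{0}$ and use the $r$-dependent MLP to force $\Box_{r_2,\mathcal{Q}}=-\Box_{r_1,\mathcal{Q}}$ for all $\mathcal{Q}$; (iv) \emph{composition} — set $\mathbf{b}_{r_3}=\mathbf{b}_{r_1}+\mathbf{b}_{r_2}$, leave $r_3$ un-shrunk ($\sigma$-values near $0$) and take $\boldsymbol\delta_{r_3}$ large enough that $\Box_{r_3,\emptyset}\supseteq\Box_{r_1,\emptyset}+\Box_{r_2,\emptyset}$, then add the two body memberships and invoke $\Box_{r_i,\mathcal{Q}}\subseteq\Box_{r_i,\emptyset}$; (v) \emph{relation implication} — set $\mathbf{b}_{r_1}=\mathbf{b}_{r_2}$ and make $r_2$'s box contain $r_1$'s (larger $\boldsymbol\delta_{r_2}$, $r_2$ un-shrunk); (vi) \emph{relation intersection} — set $\mathbf{b}_{r_1}=\mathbf{b}_{r_2}=\mathbf{b}_{r_3}$ and take $\Box_{r_3,\mathcal{Q}}\supseteq\Box_{r_1,\emptyset}\cap\Box_{r_2,\emptyset}\supseteq\Box_{r_1,\mathcal{Q}}\cap\Box_{r_2,\mathcal{Q}}$; (vii) \emph{relation mutual exclusion} — set $\mathbf{b}_{r_1}=\mathbf{b}_{r_2}$ and choose $\boldsymbol\delta_{r_1},\boldsymbol\delta_{r_2}$ so that $\Box_{r_1,\emptyset}\cap\Box_{r_2,\emptyset}=\emptyset$ in some coordinate, so the two body memberships are jointly unsatisfiable. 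Each of these constraints touches only finitely many relations whose parameters are chosen independently, so the configurations are consistent, and since no box is forced to be all of $\mathbb{R}^d$, non-entailed facts stay representable as false.

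The one point that needs genuine care is not any single pattern but robustness to qualifiers: the target box depends on $\mathcal{Q}$ through the MLP, so a naive argument would only certify a single fixed $\mathcal{Q}$. The fix, which I would make explicit, is to lean on the built-in shrinking monotonicity $\Box_{r,\mathcal{Q}}\subseteq\Box_{r,\emptyset}$ — this lets me pin down only the \emph{empty-qualifier} boxes of the consequent relations (choosing them large and un-shrunk) while merely \emph{bounding} the antecedent boxes, so all the required containments and disjointnesses hold simultaneously for every $\mathcal{Q}$. A secondary wrinkle is that nontrivial rotations do not send axis-aligned boxes to axis-aligned boxes, which would complicate a RotatE-style composition argument; I sidestep this by using only the translational part ($\Theta_r=I$) for all seven patterns, which the combined rotation-plus-translation form of $\mathcal{H}_r$ permits.
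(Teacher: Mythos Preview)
Your approach differs from the paper's in an interesting way. For symmetry, anti-symmetry, inversion and composition the paper sets $\boldsymbol\delta_r=\mathbf{0}$, $\mathbf{b}_r=\mathbf{0}$ and works purely with the rotation component $\Theta_r$ (essentially replaying the RotatE arguments: $\Theta_r^2=I$ for symmetry, $\Theta_{r_1}\Theta_{r_2}=I$ for inversion, $\Theta_{r_1}=\Theta_{r_2}\Theta_{r_3}$ for composition). You instead fix $\Theta_r=I$ throughout and use only translation and box geometry. Both routes are legitimate; yours has the merit of treating all seven patterns uniformly as box-containment or disjointness statements, and your explicit use of the shrinking bound $\Box_{r,\mathcal{Q}}\subseteq\Box_{r,\emptyset}$ to make every argument hold for \emph{arbitrary} qualifier sets $\mathcal{Q}$ is a point the paper's proof does not address at all. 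The paper's route, on the other hand, gets composition without needing any box-size slack or Minkowski-sum argument, and avoids your side remark about rotations not preserving axis-aligned boxes.

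There is one concrete error. In your treatment of relation mutual exclusion you set $\mathbf{b}_{r_1}=\mathbf{b}_{r_2}$ and then try to choose $\boldsymbol\delta_{r_1},\boldsymbol\delta_{r_2}$ so that $\Box_{r_1,\emptyset}\cap\Box_{r_2,\emptyset}=\emptyset$. But in your own decomposition $\Box_{r,\emptyset}=\Box(-\tau(\boldsymbol\delta_r),\tau(\boldsymbol\delta_r))$, and since $\tau=\Softplus>0$ this box always contains the origin; two such boxes can never be disjoint, and with equal translations the full target boxes are concentric. The fix is immediate and mirrors your own anti-symmetry argument: take $\mathbf{b}_{r_1}\neq\mathbf{b}_{r_2}$ with some coordinate of $\mathbf{b}_{r_1}-\mathbf{b}_{r_2}$ exceeding $\tau(\boldsymbol\delta_{r_1})+\tau(\boldsymbol\delta_{r_2})$ in absolute value, so that $(\mathbf{e}_h+\mathbf{b}_{r_1})+\Box_{r_1,\mathcal{Q}}$ and $(\mathbf{e}_h+\mathbf{b}_{r_2})+\Box_{r_2,\mathcal{Q}}$ are disjoint for every $\mathbf{e}_h$ and every $\mathcal{Q}$.
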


\paragraph{Qualifier-level inference pattern}
In hyper-relational KGs, inference patterns not only exist at the triple level but also at the level of qualifiers. 

\begin{definition}[qualifier implication]
Given two qualifiers $q_i$ and $q_j$, $q_i$ is said to imply $q_j$, i.e., $q_i \rightarrow q_j$ iff for any fact $\mathcal{F} = \left(\mathcal{T}, \mathcal{Q}\right)$, if attaching $q_i$ to $\mathcal{Q}$ results in a true (resp. false) fact, then attaching $q_j$ to $\mathcal{Q}\cup \left\{q_i\right\}$ also results in a true (resp. false) fact. Formally, $q_i \rightarrow q_j$ implies
\begin{equation}
    \forall \ \mathcal{T},\mathcal{Q}: \left(\mathcal{T}, \mathcal{Q}\cup \left\{q_i\right\}\right) \rightarrow \left(T, Q \cup \{q_i, q_j\}\right).
\end{equation}
\end{definition}

\begin{definition}[qualifier exclusion] 
Two qualifiers $q_i,q_j$ are said to be mutually exclusive iff for any fact $\mathcal{F} = \left(\mathcal{T}, \mathcal{Q}\right)$, 
by attaching $q_i,q_j$ to the qualifier set of $\mathcal{F}$, 
the new fact $\mathcal{F}^{\prime}=\left(\mathcal{T}, \mathcal{Q} \cup \left\{q_i,q_j\right\} \right)$ is false, meaning that they lead to a contradiction, i.e.,  $q_i \wedge q_j \rightarrow \bot$. Formally, $q_i \wedge q_j \rightarrow \bot$ implies
\begin{equation}
    \forall \ \mathcal{T},\mathcal{Q} \: : \left(T, Q \cup \left\{q_i,q_j\right\} \right) \rightarrow \bot 
\end{equation}
\end{definition}

Note that if two qualifiers $q_i,q_j$ are neither mutually exclusive nor forming implication pair, then $q_i,q_j$ are said to be overlapping, a state between implication and mutual exclusion. 
Qualifier overlapping, in our case, can be captured by box intersection/overlapping. 
Qualifier overlapping itself does not form any logical property in the form of $\psi \rightarrow \phi$. 
However, when involving three qualifiers and two of them overlap, qualifier intersection can be modeled. 

\begin{definition}[qualifier intersection] 

A qualifier $q_k$ is said to be an intersection of two qualifiers $q_i,q_j$ iff for any fact $\mathcal{F} = \left(\mathcal{T}, \mathcal{Q}\right)$, if attaching $q_i,q_j$ to $\mathcal{Q}$ results in a true (resp. false) fact, then by replacing $\{q_i,q_j\}$ with $q_k$, the truth value of the fact does not change. Namely, $q_i \wedge q_j \rightarrow q_k$ implies
\begin{equation}
    \forall \ \mathcal{T},\mathcal{Q}: \left(T, Q \cup \{q_i,q_j\}  \right) \rightarrow  \left(\mathcal{T}, \mathcal{Q} \cup \left\{q_k\right\} \right).
\end{equation}
\end{definition}

Apparently, qualifier intersection $ q_i \wedge q_j \rightarrow  q_k$ necessarily implies qualifier implications $q_i \rightarrow q_k$ and $q_j \rightarrow q_k$. 
Hence, qualifier intersection can be viewed as a combination of two qualifier implications, and this can be generalized to $ q_1 \wedge q_2 \wedge \cdots  \rightarrow  q_k$.
Proposition \ref{prop:qualifier_inference_pattern} shows that ShrinkE is able to infer qualifier implication, exclusion, and composition. 

\begin{proposition}
ShrinkE is able to infer qualifier implication, mutual exclusion, and intersection.
\label{prop:qualifier_inference_pattern}
\end{proposition}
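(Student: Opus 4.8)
The plan is to translate each of the three qualifier-level patterns into an elementary relation — containment or disjointness — between the \emph{individual} shrinking boxes attached to the qualifiers, and then to read off which MLP-produced shrinking vectors realise that relation. Fix a primal triple $\mathcal{T}=(h,r,t)$, write $\Box_r:=\mathcal{B}_r(\mathbf{e}_h)$ for its query box (always non-empty, since its half-widths $\tau(\boldsymbol{\delta}_r)$ are strictly positive), and abbreviate $\mathbf{s}_{r,q}:=\mathbf{s}_{r,k,v}$, $\mathbf{S}_{r,q}:=\mathbf{S}_{r,k,v}$ for $q=(k:v)$. For any qualifier set $\mathcal{Q}$ the model scores $(\mathcal{T},\mathcal{Q})$ by the point-to-box distance from $\mathbf{e}_t$ to the target box $\Box_{\mathcal{Q}}=\bigcap_{q\in\mathcal{Q}}\mathcal{S}_{r,q}(\Box_r)$, so $(\mathcal{T},\mathcal{Q})$ is true exactly when $\mathbf{e}_t\in\Box_{\mathcal{Q}}$; and when $\Box_{\mathcal{Q}}$ is empty (some coordinate interval inverted) the second summand of $D$ is bounded below by a strictly positive constant independent of $\mathbf{e}_t$ — which can be enlarged at will by scaling $\boldsymbol{\delta}_r$ — so such a fact is false for every tail. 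Adding a qualifier just intersects $\Box_{\mathcal{Q}}$ with one more shrinking box; since intersection is monotone under inclusion and preserves emptiness, and since the side lengths $\mathbf{L}:=\mathbf{M}-\mathbf{m}>0$ of $\Box_r$ (against which shrinking is measured) do not depend on $\mathbf{e}_h$, everything reduces to controlling the individual boxes $\mathcal{S}_{r,q}(\Box_r)$. Finally, by Proposition~\ref{prop:qualifier_monotonicity} the ``$\textit{false}\Rightarrow\textit{false}$'' halves of the patterns hold automatically, so I only treat the ``$\textit{true}\Rightarrow\textit{true}$'' halves.

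\emph{Implication.} I would show $q_i\to q_j$ is realised as soon as $\mathcal{S}_{r,q_i}(\Box_r)\subseteq\mathcal{S}_{r,q_j}(\Box_r)$: then $\mathcal{S}_{r,q_i}(\Box_r)\cap\mathcal{S}_{r,q_j}(\Box_r)=\mathcal{S}_{r,q_i}(\Box_r)$, hence $\Box_{\mathcal{Q}\cup\{q_i,q_j\}}=\Box_{\mathcal{Q}\cup\{q_i\}}$ for every ambient $\mathcal{Q}$, so the two facts receive the same score and the same truth value. The corners of $\mathcal{S}_{r,q}(\Box_r)$ are $\mathbf{m}+\sigma(\mathbf{s}_{r,q})\odot\mathbf{L}$ and $\mathbf{M}-\sigma(\mathbf{S}_{r,q})\odot\mathbf{L}$, so with $\mathbf{L}>0$ the inclusion is implied by the coordinatewise inequalities $\mathbf{s}_{r,q_j}\le\mathbf{s}_{r,q_i}$ and $\mathbf{S}_{r,q_j}\le\mathbf{S}_{r,q_i}$ (a smaller shrink at every face), which are attainable outputs of the MLP.

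\emph{Mutual exclusion.} Here I need $(\mathcal{T},\mathcal{Q}\cup\{q_i,q_j\})$ false for all $\mathcal{T},\mathcal{Q}$, for which it suffices that $\mathcal{S}_{r,q_i}(\Box_r)\cap\mathcal{S}_{r,q_j}(\Box_r)=\emptyset$, as that makes $\Box_{\mathcal{Q}\cup\{q_i,q_j\}}$ empty for every $\mathcal{Q}$. Using $\mathbf{M}_\ell=\mathbf{m}_\ell+\mathbf{L}_\ell$, the coordinate-$\ell$ interval of $\mathcal{S}_{r,q_i}(\Box_r)$ ends at $\mathbf{m}_\ell+(1-\sigma((\mathbf{S}_{r,q_i})_\ell))\mathbf{L}_\ell$ and that of $\mathcal{S}_{r,q_j}(\Box_r)$ starts at $\mathbf{m}_\ell+\sigma((\mathbf{s}_{r,q_j})_\ell)\mathbf{L}_\ell$, so the two boxes are disjoint as soon as, in a single coordinate $\ell$, $\sigma((\mathbf{S}_{r,q_i})_\ell)+\sigma((\mathbf{s}_{r,q_j})_\ell)>1$ — again an attainable scalar condition on the MLP outputs (push one upper-face shrink of $q_i$ and one lower-face shrink of $q_j$ large). \emph{Intersection.} For $q_i\wedge q_j\to q_k$ I need $(\mathcal{T},\mathcal{Q}\cup\{q_i,q_j\})\to(\mathcal{T},\mathcal{Q}\cup\{q_k\})$, which holds whenever $\mathcal{S}_{r,q_i}(\Box_r)\cap\mathcal{S}_{r,q_j}(\Box_r)\subseteq\mathcal{S}_{r,q_k}(\Box_r)$, so that $\Box_{\mathcal{Q}\cup\{q_i,q_j\}}\subseteq\Box_{\mathcal{Q}\cup\{q_k\}}$. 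The left box has lower corner $\mathbf{m}+\max(\sigma(\mathbf{s}_{r,q_i}),\sigma(\mathbf{s}_{r,q_j}))\odot\mathbf{L}$ and upper corner $\mathbf{M}-\max(\sigma(\mathbf{S}_{r,q_i}),\sigma(\mathbf{S}_{r,q_j}))\odot\mathbf{L}$, so the inclusion reduces to $\mathbf{s}_{r,q_k}\le\max(\mathbf{s}_{r,q_i},\mathbf{s}_{r,q_j})$ and $\mathbf{S}_{r,q_k}\le\max(\mathbf{S}_{r,q_i},\mathbf{S}_{r,q_j})$ coordinatewise; the stronger choice $\mathbf{s}_{r,q_k}\le\min(\mathbf{s}_{r,q_i},\mathbf{s}_{r,q_j})$, $\mathbf{S}_{r,q_k}\le\min(\mathbf{S}_{r,q_i},\mathbf{S}_{r,q_j})$ also works and simultaneously realises $q_i\to q_k$ and $q_j\to q_k$, matching the remark preceding the proposition. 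All these choices are realisable by the MLP, which would finish the three cases.

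The step that needs the most care is \emph{uniformity}: the definitions quantify over every primal triple $\mathcal{T}$ and every surrounding qualifier set $\mathcal{Q}$. Uniformity in $\mathcal{Q}$ is free, because intersecting with further shrinking boxes preserves both inclusion (implication, intersection) and emptiness (exclusion); uniformity in $h$ is free, because all the corner inequalities above are expressed through the relative shrinks $\sigma(\mathbf{s}_{r,q}),\sigma(\mathbf{S}_{r,q})$ against $\mathbf{L}>0$ and are hence insensitive to where $\mathcal{H}_r$ places $\Box_r$; the only genuinely delicate point is that the shrinking vectors come from a \emph{shared} MLP applied to $(r_\theta,k_\theta,v_\theta)$, so ``ShrinkE is able to infer'' the pattern is meant — as for the triple-level patterns of Proposition~\ref{prop:triple_inference_pattern} — in the sense that there exists a choice of the relation/entity embeddings and MLP weights putting the displayed inequalities in force on exactly the relevant inputs, which is possible because each inequality constrains only a few output coordinates and these constraints are mutually compatible for any non-conflicting collection of qualifier patterns.
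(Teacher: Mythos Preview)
Your proposal is correct and follows the same geometric route as the paper --- reducing implication to box containment, exclusion to box disjointness, and intersection to containment of the pairwise intersection in a third box --- but you work out the explicit corner inequalities on the shrinking vectors and the uniformity over $\mathcal{T},\mathcal{Q}$, whereas the paper's proof is a three-sentence sketch. Your inclusion direction for intersection, $\mathcal{S}_{r,q_i}(\Box_r)\cap\mathcal{S}_{r,q_j}(\Box_r)\subseteq\mathcal{S}_{r,q_k}(\Box_r)$, is in fact the one required by the stated definition of $q_i\wedge q_j\to q_k$; the paper's sketch phrases the containment the other way round.
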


\begin{table*}[]
    \centering
    \caption{Link prediction results on three benchmarks with the number in the parentheses denoting the ratio of facts with qualifiers. Baseline results are taken from \citet{DBLP:conf/emnlp/GalkinTMUL20}. }
    \resizebox{\linewidth}{!}{
    \begin{tabular}{lcccccccccccc}
    \hline 
    \multirow{2}{*}{Method} & \multicolumn{3}{c}{WikiPeople (2.6)}  & & \multicolumn{3}{c}{JF17K (45.9)} & & \multicolumn{3}{c}{WD50K (13.6)} \\
    \cline {2-4} \cline {6-8} \cline{10-12} & MRR & H@1 & H@10 & & MRR & H@ 1& H@ 10 & & MRR & H@ 1 & H@ 10 \\
    \hline 
    m-TransH & $0.063$ & $0.063$ & $0.300$ & & $0.206$ & $0.206$ & $0.463$ & & $-$ & $-$ & $-$\\
    RAE & $0.059$ & $0.059$ & $0.306$ & & $0.215$ & $0.215$ & $0.469$ & & $-$ & $-$ & $-$\\
    NaLP-Fix & $0.420$ & $0.343$ & $0.556$ & & $0.245$ & $0.185$ & $0.358$ & & 0.177 & 0.131 & 0.264 \\
    NeuInfer & $0.350$ & $0.282$ & $0.467$ & & $0.451$ & $0.373$ & $0.604$ & & $-$ & $-$ & $-$ \\
    HINGE & $0.476$ & $0.415$ & $0.585$ & & $0.449$ & $0.361$ & $0.624$ & & $0.243$ & $0.176$ & $0.377$ \\
    Transformer & 0.469 & 0.403 & 0.586 & & 0.512 & 0.434 & 0.665 & & 0.264 & 0.194 & 0.401 \\
    BoxE & 0.395 & 0.293 & 0.503 & & 0.560 & 0.472 & 0.722 & & $-$ & $-$ & $-$ \\
    StarE & \textbf{0.491} & 0.398 & \textbf{0.648} & & 0.574 & 0.496 & 0.725 & & \textbf{0.349} & \underline{0.271} & \textbf{0.496} \\
    \hline
    ShrinkE & \underline{0.485} & \textbf{0.431} & \underline{0.601} & & \textbf{0.589} &\textbf{ 0.506} & \textbf{0.749} &  & \underline{ 0.345} & \textbf{0.275} & \underline{0.482} \\
    \hline
    \end{tabular}
    }
    \label{tab:main_results}
\end{table*}

\begin{table*}[]
    \centering
    \caption{Link prediction results on WD50K splits with the number in the parentheses denoting the ratio of facts with qualifiers. Baseline results are taken from \citet{DBLP:conf/emnlp/GalkinTMUL20}. }
    \begin{tabular}{lccccccccccc}
    \hline 
    \multirow{2}{*}{Method} & \multicolumn{3}{c}{WD50K (33)} & & \multicolumn{3}{c}{WD50K (66)} & & \multicolumn{3}{c}{WD50K (100)} \\
    \cline {2-4} \cline {6-8} \cline{10-12} & MRR & H@1 & H@10 & & MRR & H@ 1 & H@ 10 & & MRR & H@ 1 & H@ 10 \\
    \hline 
    NaLP-Fix & 0.204 & 0.164 & 0.277 & & 0.334 & 0.284 & 0.423 & & 0.458 & 0.398 & 0.563 \\
    HINGE & 0.253 & 0.190 & 0.372 & & 0.378 & 0.307 & 0.512 & & 0.492 & 0.417 & 0.636 \\
    Transformer & 0.276 & 0.227 & 0.371 & & 0.404 & 0.352 & 0.502 & & 0.562 & 0.499 & 0.677 \\
    StarE & \underline{0.331} & \underline{0.268} & \textbf{0.451} & & \underline{0.481} & \underline{0.420} & \underline{0.594} & & \underline{0.654} & \underline{0.588} & \underline{0.777} \\
    \hline
    ShrinkE & \textbf{0.336} & \textbf{0.272} & \underline{0.449} & & \textbf{0.511} & \textbf{0.422} & \textbf{0.611} & & \textbf{0.695} & \textbf{0.629} & \textbf{0.814}   \\
    \hline
    \end{tabular}
    \label{tab:wd50k-variants}
\end{table*}

\section{Evaluation}

In this section, we evaluate the effectiveness of ShrinkE on hyper-relational link prediction tasks. 
 
\subsection{Experimental Setup}

\paragraph{Datasets.} 
We conduct link prediction experiment on three hyper-relational KGs: JF17K \citep{DBLP:conf/ijcai/WenLMCZ16}, WikiPeople \citep{DBLP:conf/www/GuanJWC19}, and WD50k \citep{DBLP:conf/emnlp/GalkinTMUL20}. 
JF17K is extracted from Freebase while WikiPeople and WD50k are extracted from Wikidata. In WikiPeople and WD50k, only $11.6\%$ and $13.6\%$ of the facts, respectively, contain qualifiers, while the remaining facts contain only triples (after dropping statements containing literals in WikiPeople, only 2.6\% facts contain qualifiers). 
For better comparison, we also consider three splits of WD50K that contain a higher percentage of triples with qualifiers. 
The three splits are WD50K(33), WD50K(66), and WD50K(100), which contain 33\%, 66\%, and 100\% facts with qualifiers, respectively. 
Statistics of the datasets are given in Table \ref{tab:dataset}.
We conjecture that the performance on WikiPeople and WD50k will be dominated by the scores of triple-only facts while the performance on the variants of WD50k will be dominated by the modeling of qualifiers. 
We conjecture that WD50K will be a more challenging benchmark than JF17K and WikiPeople. 
Besides, WD50K still contains only a small percentage (13.6\%) of facts that contain qualifiers.
Since JF17K does not provide a validation set, we split $20\%$ of facts from the training set as the validation set. Details of the three datasets are given in Table \ref{tab:dataset}.

\paragraph{Environments and hyperparameters}
We implement ShrinkE with Python 3.9 and Pytorch 1.11, and train our model on one Nvidia A100 GPU with 40GB of VRAM. 
We use Adam optimizer with a batch size of $128$ and an initial learning rate of $0.0001$.  
For negative sampling, we follow the strategy used in StarE \citep{DBLP:conf/emnlp/GalkinTMUL20} by randomly corrupting the head or tail entity in the primal triple. 
Different from HINGE \citep{DBLP:conf/www/RossoYC20} and NeuInfer \citep{DBLP:conf/acl/GuanJGWC20} that score all potential facts one by one that takes an extremely long time for evaluation, ShrinkE ranks each target answer against all candidates in a single pass and significantly reduces the evaluation time. 
We search the dimensionality from $[50, 100,200,300]$ and the best one is $200$. We set the temperature parameter to be $t=1.0$. We use the label smoothing strategy and set the smoothing rate to be $0.1$. 
We repeat all experiments for $5$ times with different random seeds and report the average values, the error bars are relatively small and are omitted. 
Code is available at \footnote{https://github.com/xiongbo010/ShrinkE}.

\paragraph{Baselines} 
We compare ShrinkE against various models, including m-TransH \citep{DBLP:conf/ijcai/WenLMCZ16}, RAE \citep{DBLP:conf/www/ZhangLMM18}, NaLP-Fix \citep{DBLP:conf/www/RossoYC20}, HINGE \citep{DBLP:conf/www/RossoYC20}, NeuInfer \citep{DBLP:conf/acl/GuanJGWC20}, BoxE \cite{boxE}, Transformer and StarE \citep{DBLP:conf/emnlp/GalkinTMUL20}. Note that we exclude Hy-Transformer \citep{DBLP:journals/corr/abs-2104-08167}, GRAN 
\citep{DBLP:conf/acl/WangWLZ21} and QUAD \cite{DBLP:journals/corr/abs-2208-14322} for comparison because 
1) they are heavily based on StarE and Transformer; and
2) they leverage auxiliary training tasks, which can also be incorporated into our framework and we leave as one future work.

\paragraph{Evaluation} 
We strictly follow the settings of  \citet{DBLP:conf/emnlp/GalkinTMUL20}, where the aim is to predict a missing head/tail entity in a hyper-relational fact. 
We consider the widely used ranking-based metrics for link prediction: mean reciprocal rank (MRR) and H@K (K=1,10). For ranking calculation, we consider the filtered setting by filtering the facts 
existing in the training and validation sets \cite{DBLP:conf/nips/BordesUGWY13}.

\subsection{Main Results and Analysis}

Table \ref{tab:main_results} and Table \ref{tab:wd50k-variants} summarize the performances of all approaches on the six datasets. Overall, ShrinkE achieves either the best or the second-best results against all baselines, showcasing the expressivity and capability of ShrinkE on hyper-relational link prediction. 
In particular, We observe that ShrinkE outperforms all baselines on JF17K and the three variants of WD50K with a high ratio of facts containing qualifiers while achieving highly competitive results on WikiPeople and the original version of WD50K that contain fewer facts with qualifiers. 
Interestingly, we find that the performance gains increase when increasing the ratio of facts containing qualifiers. On WD50K (100) where 100\% facts contain qualifiers, the performance gain of ShrinkE is most significant across all metrics (6.2\%, 6.9\%, and 4.7\% improvements over MRR, H@1, and H@10, respectively). 
We believe this is because that ShrinkE is excellent at modeling qualifiers due to its explicit modeling of inference patterns. 



\begin{table}[]
    \centering
    \resizebox{\linewidth}{!}{
    \begin{tabular}{cccc}
    \hline
    Method & MRR & H@ 1 & H@ 10 \\
    \hline
    ShrinkE (w/o translation) & 0.583 & 0.495 & 0.729 \\
    ShrinkE (w/o rotation) & 0.581 & 0.497 & 0.724 \\
    ShrinkE (w/o shrinking) & 0.571 & 0.490 & 0.711 \\
    \hline
    ShrinkE  & \textbf{0.589} & \textbf{0.506} & \textbf{0.749} \\
    \hline
    \end{tabular}
    }
    \caption{The performance of ShrinkE by removing one relational component on JF17K. }
    \label{tab:ablation_ShrinkE}
\end{table}

\paragraph{Case analysis}
Table \ref{tab:qualifier_implication} shows some examples of qualifier implication pairs recovered by our learned embeddings. Note that exclusions pairs are ubiquitous (i.e., most of the random qualifiers are mutually exclusive) and hence we do not analyze them. We find that some qualifier implications happen when they are about geographic information and involve geographic inclusion such as \emph{Monte Carlo} is in \emph{Monaco}. Interestingly, we find that qualifiers associated with key \emph{owned\_by} imply (\emph{of, voting interest}), and qualifiers with key \emph{emergency phone number} imply (\emph{has\_use, police}) or (\emph{has\_use, \emph{file department}}), which conceptually make sense.

\subsection{Ablations and Parameter Sensitivity}

\paragraph{Impact of relational components} 
To determine the importance of each component in relational modeling, we conduct an ablation study by considering three versions of ShrinkE in which one of the components (translation, rotation, and shrinking) is removed. 
Table \ref{tab:ablation_ShrinkE} shows that the removal of each component of the relational transformation leads to a degradation in performance, validating the importance of each component.
In particular, by removing the qualifier shrinking, which is the main contribution of our framework, the performance reduces 3\% and 5\% in  MRR and H@10, respectively, showcasing the usefulness of modeling qualifiers as shrinking. The removals of translation and rotation both result in around 1\% and 2\% reduction in MRR and H@10, respectively. 

\begin{table}[]
    \centering
    \resizebox{\linewidth}{!}{
    \begin{tabular}{cc}
\hline
body & head \\
\hline
 (residence: Monte Carlo) & (country, Monaco)  \\
 (residence: Belgrade) & (country, Serbia) \\
 (owned\_by: X) & (of, voting interest) \\
 (emergency phone number: Y) & (has\_use, police) \\
 (emergency phone number: Z) & (has\_use, fire department) \\
 (used\_by: software) & (via, operating\_system) \\
\hline
\end{tabular}
}
\caption{Example pairs of qualifiers with implication relations (body $\rightarrow$ head). $X \in $ [\emph{Eric Schmidt}, \emph{Mark Zuckerberg}, \emph{Dustin Moskovitz}, \emph{Larry Page}] denotes a CEO name of a company. $Y \in [112,115,113, \cdots]$ and $Z \in [912,18,192, \cdots]$ are emergency numbers involving \emph{police} and \emph{fire department}, respectively. 
Qualifier exclusion pairs are ubiquitous and are hence omitted. }
    \label{tab:qualifier_implication}
\end{table}

\begin{figure}
    \centering
    \includegraphics[width=0.8\textwidth]{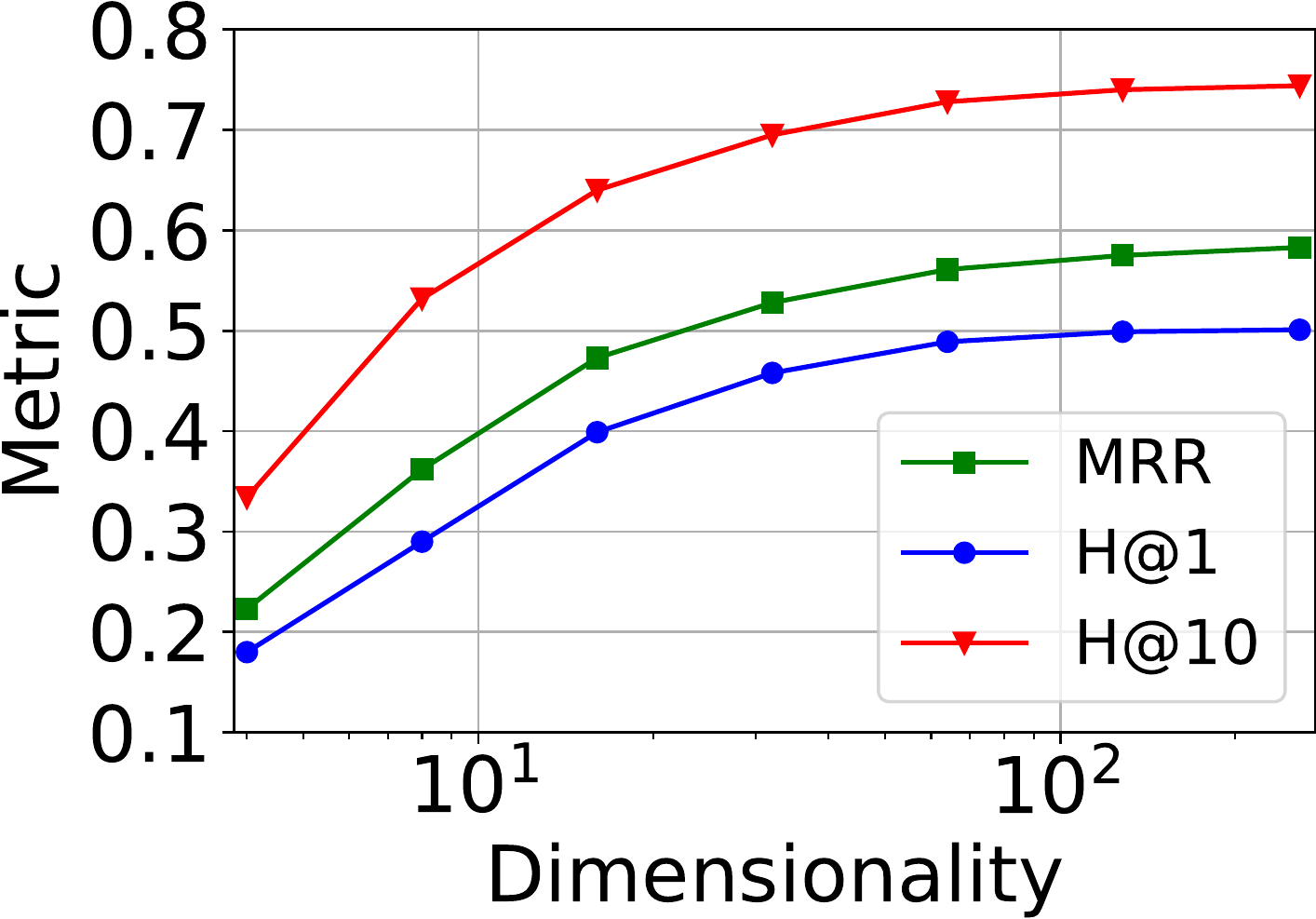}
    \caption{Performance of ShrinkE with different dimensions $d=[4,8,16,32,64,128,256]$ on JF17K. }
    \label{fig:dimension}
\end{figure}

\paragraph{Impact of dimensionality} 
We conduct experiments on JF17K under a varied number of dimensions $d=[4,8,16,32,64,128,256]$. As Fig. \ref{fig:dimension} depicts,  the performance increases when increasing the number of dimensions. 
However, the growth trend gradually flattens with the increase of dimensions and it achieves comparable performance when the dimension is higher than $128$.

\subsection{Discussion}

\paragraph{Comparison with neural network models} 
Heavy neural network models such as GRAN 
\citep{DBLP:conf/acl/WangWLZ21} and QUAD \cite{DBLP:journals/corr/abs-2208-14322} are built on relational GNNs and/or Transformers and require a large number of parameters. In contrast, ShrinkE is a neuro-symbolic model that requires only one MLP layer and a much smaller number of parameters. The logical modelling of ShrinkE makes it more explainable than GNN-based and Transformer-based methods. 

\paragraph{Comparison with other box embeddings in KGs} 
ShrinkE is the first to not only represent hyper-relational facts, but also explicitly model the logical properties of these facts. SrinkE is different from previous box embedding methods \cite{boxE} of KGs in three key modules: 1) our point-to-box transform function modelling triple inference patterns; 2) a new point-to-box distance function; and 3) we introduce box shrinking to model qualifier-level inference patterns. Moreover, we provide a comprehensive theoretical analysis of ShrinkE on modelling various logical properties.

\section{Conclusion}
We present a novel hyper-relational KG embedding model ShrinkE. ShrinkE models a primal triple as a spatio-functional transformation while modeling each qualifier as a shrinking that monotonically narrows down the answer set. We proved that ShrinkE is able to spatially infer core inference patterns at different levels including triple-level, fact-level, and qualifier-level. Experimental results on three benchmarks demonstrate the advantages of ShrinkE in predicting hyper-relational links.  

\section*{Limitations}

Currently, the main goal of ShrinkE is to model inference patterns directly in the embedding space for hyper-relational KGs and we do not explore more advanced training strategies that have recently been proposed. 
For example, recent works \citep{DBLP:journals/corr/abs-2104-08167,DBLP:conf/acl/WangWLZ21,DBLP:journals/corr/abs-2208-14322} have demonstrated that adding auxiliary training tasks, e.g., the task of predicting qualifier entities, can further improve the overall performance.  
We believe such auxiliary training tasks can also benefit ShrinkE and we leave it as future work. 
Another limitation of ShrinkE, though rarely happens, is that when dealing with semantically opaque contexts, the monotonicity assumption might not hold. In that case, we need ad-hoc solutions. One simple way is to explicitly distinguish semantically transparent and semantically opaque contexts. 

\section*{Ethics Statement}
The authors declare that they have no conflicts of interest. 
This article does not contain any studies involving business data and personal information. 
Our experimentation does not involve any ethical concerns. 
However, similar to other models, when deploying our link prediction model to real-world applications such as online recommendation systems, the prediction might be biased or unfair to some ethic/gender groups. We advise researchers in the community to look into bias \cite{bourli2020bias} and fairness \cite{fu2020fairness} in KGs. 

\section*{Acknowledgement}
The authors thank the International Max Planck Research School for Intelligent Systems (IMPRS-IS) for supporting Bo Xiong. 
Bo Xiong is funded by the European Union’s Horizon 2020 research and innovation programme under the Marie Skłodowska-Curie grant agreement No: 860801. Mojtaba Nayyeri is funded by the German Federal Ministry for Economic Affairs and Climate Action under Grant Agreement Number 01MK20008F (Service-Meister). This research was partially funded by the Ministry of Science, Research, and the Arts (MWK) Baden-Württemberg, Germany, within the Artificial Intelligence Software Academy (AISA) and the German Research Foundation (DFG) via grant agreement number STA 572/18-1 (Open Argument Mining). We acknowledge the support by the Stuttgart Center for Simulation Science (SimTech).

\bibliographystyle{acl_natbib}
\bibliography{anthology}

\newpage
\appendix
\label{sec:appendix}

\section{Supplemental Related Works}
We survey some supplemental related work on binary relational KG embeddings and geometric relational embeddings. 

\paragraph{Binary relational KG embeddings}
Most of the existing KG embedding methods consider binary relational KGs where each fact is represented in the form of triple $(h,r,t)$. Prominent examples include the \emph{additive} (or \emph{translational}) family such as TransE \cite{DBLP:conf/nips/BordesUGWY13} that models each fact as a translation $\mathbf{s}+\mathbf{r} \approx \mathbf{o}$, and the \emph{multiplicative} (or \emph{bilinear}) family such as RESCAL \cite{DBLP:conf/icml/NickelTK11} that models the relation between two entities as a bilinear interaction $<\mathbf{h},\mathbf{r},\mathbf{t}>$. 
Many other works have been proposed to enhance the translational and bilinear models such as modeling relational mapping properties (e.g., one-to-many and many-to-many) \cite{DBLP:conf/aaai/WangZFC14}, modeling inference patterns (e.g., symmetry and composition) \cite{DBLP:conf/icml/TrouillonWRGB16, DBLP:conf/iclr/SunDNT19}, and modeling complex graph structures (e.g., hierarchies and cycles) \cite{DBLP:conf/acl/ChamiWJSRR20, DBLP:conf/kdd/XiongZNXP0S22} to name a few. 


\paragraph{Geometric relational embeddings} Our work is closely related to geometric relational embeddings. See \cite{DBLP:journals/corr/abs-2304-11949} for a systematic survey. Geometric relational embeddings encode real-world relational knowledge by geometric objects such as convex regions like $n$-balls \citep{kulmanov2019embeddings}, convex cones \citep{ DBLP:conf/nips/ZhangWCJW21,DBLP:journals/corr/abs-2303-11858}, axis-parallel boxes \citep{DBLP:conf/acl/McCallumVLM18,DBLP:journals/corr/abs-2201-09919,DBLP:conf/iclr/RenHL20} and non-Euclidean manifold components \cite{DBLP:conf/nips/XiongCNS22}. A key advantage of these geometric embeddings is that they nicely model the set-theoretic semantics that can be used to capture logical rules of KGs~\cite{boxE}, ontological axioms \citep{kulmanov2019embeddings, DBLP:journals/corr/abs-2201-09919}, transitive closure~\citep{DBLP:conf/acl/McCallumVLM18}, and logical query for multi-hop reasoning~\citep{DBLP:conf/iclr/RenHL20}. 
Different from all previous work, ShrinkE is the first geometric embedding that aims at modeling inference patterns for hyper-relational KGs.

\section{Proof of propositions}
\label{app:proof}

\begin{proposition}
Given any two facts $\mathcal{F}_1=\left(\mathcal{T},\mathcal{Q}_1\right)$ and $\mathcal{F}_2=\left(\mathcal{T},\mathcal{Q}_2\right)$ where $\mathcal{Q}_2 \subseteq \mathcal{Q}_1$, i.e., $\mathcal{F}_2$ is a partial fact of $\mathcal{F}_1$, the output of the scoring function $f(\cdot)$ of ShrinkE satisfy the constraint $f(\mathcal{F}_2) \geq f(\mathcal{F}_1)$, which implies Eq.(\ref{eq:momonotonicity}). 
\end{proposition}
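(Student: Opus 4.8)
The plan is to decouple the statement into a purely geometric containment of the two target boxes and then a comparison of the point-to-box value $D$ on nested boxes. First I would show $\Box_{\mathcal Q_1}\subseteq\Box_{\mathcal Q_2}$: each qualifier $(k{:}v)\in\mathcal Q$ maps the primal query box $\Box_r=\mathcal B_r(\mathbf e_h)$ to a shrinking box $\mathcal S_{r,k,v}(\Box_r)\subseteq\Box_r$ (the containment $\mathcal S(\Box)\subseteq\Box$ is already established), and $\Box_{\mathcal Q}$ is the intersection of these boxes, with $\Box_{\emptyset}=\Box_r$. Since $\mathcal Q_2\subseteq\mathcal Q_1$, the family intersected for $\mathcal F_1$ is that of $\mathcal F_2$ plus the extra shrinking boxes indexed by $\mathcal Q_1\setminus\mathcal Q_2$, and since the coordinatewise intersection (maximum of lower corners, minimum of upper corners) is antitone in its argument family, $\Box_{\mathcal Q_1}=\Box_{\mathcal Q_2}\cap\bigcap_{(k:v)\in\mathcal Q_1\setminus\mathcal Q_2}\mathcal S_{r,k,v}(\Box_r)\subseteq\Box_{\mathcal Q_2}$. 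The case $m=0$ and the cases where some intersection is empty are subsumed, as the intersection formula still returns a well-defined (possibly empty) box and an empty box is contained in every box.

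It then remains to control how $D(\mathbf e,\cdot)$ behaves under enlarging a box: instantiating such a relation with $\mathbf e=\mathbf e_t$, the larger box $\Box_{\mathcal Q_2}$ and the smaller box $\Box_{\mathcal Q_1}$ gives the desired ordering of $f(\mathcal F_2)$ and $f(\mathcal F_1)$, and a fortiori Eq.~(\ref{eq:momonotonicity}) (if a fact clears the acceptance threshold, so does each of its partial facts). I would split $D=D_{\mathrm{in}}+D_{\mathrm{out}}^{2}$ with $D_{\mathrm{out}}(\mathbf e,\Box)=|\mathbf e-\mathbf m|_{1}+|\mathbf e-\mathbf M|_{1}-|\max(\mathbf{0},\mathbf M-\mathbf m)|_{1}$ and analyze the pieces. $D_{\mathrm{out}}$ decomposes over coordinates into a sum of nonnegative per-coordinate outside-distances $2\max(0,\,m_i-e_i,\,e_i-M_i)$ (a degenerate coordinate $M_i<m_i$ contributing $|e_i-m_i|+|e_i-M_i|$ instead), each controlled by the coordinatewise containment $m_i\le m'_i\le M'_i\le M_i$; $D_{\mathrm{in}}=|\mathbf e-\mathbf c|_{1}/\sum_i\max(0,M_i-m_i)$ has a normalizer that is monotone in the box and a numerator bounded by $\tfrac12\sum_i(M_i-m_i)$ when $\mathbf e$ is inside.

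The main obstacle is that these two pieces react oppositely to shrinking a box — the outside penalty $D_{\mathrm{out}}^{2}$ can only grow, while $D_{\mathrm{in}}$ depends on the center $\mathbf c$, which itself moves under shrinking, so $D_{\mathrm{in}}$ is not monotone in the box — hence a term-by-term argument does not close and one must show the two combine the right way in the regime the statement concerns. I would expect the proof to branch on where $\mathbf e_t$ sits relative to $\Box_{\mathcal Q_1}$: if $\mathbf e_t$ lies inside the smaller (hence also the larger) box, both $D_{\mathrm{out}}^{2}$ terms vanish and only the normalized center terms are compared, via $|\mathbf e_t-\mathbf c|_{1}\le\tfrac12\sum_i(M_i-m_i)$ together with the monotone normalizer; if $\mathbf e_t$ is outside, the quadratic $D_{\mathrm{out}}^{2}$ leads and the normalized center term is a lower-order correction that must be shown not to flip the inequality, where the sigmoid/soft-plus parametrization of $\mathcal B_r$ and $\mathcal S$ — which keeps all side lengths strictly positive and normalizers bounded away from $0$ — is the natural lever. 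A final light check is that the reciprocal-relation augmentation used in training plays no role, since it acts only on the primal triple $\mathcal T$ shared by $\mathcal F_1$ and $\mathcal F_2$.
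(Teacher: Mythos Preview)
Your plan is structurally the same as the paper's: first establish the box containment from $\mathcal{Q}_2\subseteq\mathcal{Q}_1$, then compare $D(\mathbf e_t,\cdot)$ on the two nested boxes by a case split on where $\mathbf e_t$ lies. The paper uses precisely the three positions you anticipate (inside the smaller box; between the two boxes; outside the larger box) and asserts the needed inequality in each case.

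Where you differ is in rigor, and in your favor. In the inside--inside case the paper simply says $D$ is ``monotonically increasing w.r.t.\ the increase of distance from the tail point to the center of box''; in the outside--outside case it says $D$ is ``monotonically decreasing w.r.t.\ the increase of volume of box''. You correctly flag that the center $\mathbf c$ itself moves under shrinking, so $D_{\mathrm{in}}$ is not monotone in the box per se, and you try to control it via $|\mathbf e_t-\mathbf c|_1\le\tfrac12\sum_i(M_i-m_i)$ combined with the monotone normalizer. That is exactly the subtlety the paper's proof does not address; what you call ``the main obstacle'' is the step the paper dispatches with a one-line monotonicity claim. (The paper's proof also has the box labels swapped throughout---it calls $\Box_{\mathcal F_2}$ ``the small box'' and $\Box_{\mathcal F_1}$ ``the larger box''---whereas your containment direction $\Box_{\mathcal Q_1}\subseteq\Box_{\mathcal Q_2}$ is the one consistent with $\mathcal Q_2\subseteq\mathcal Q_1$.)

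In short: same skeleton as the paper; your decomposition $D=D_{\mathrm{in}}+D_{\mathrm{out}}^2$ and the moving-center concern go beyond what the paper actually proves, so you are not missing anything the paper supplies.
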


\begin{proof}
We first prove that the resulting box of $\mathcal{F}_2$ subsumes the resulting box of $\mathcal{F}_2$.  Since the primal triple of $\mathcal{F}_1$ and $\mathcal{F}_2$ are the same (let assume it is $\mathcal{T}=(h,r,t)$ ), the spanned boxes of the two facts are $\mathcal{H}_r(\mathbf{e}_h)$. Since $\mathcal{Q}_2 \subseteq \mathcal{Q}_1$, the final shrunken box of $\mathcal{F}_1$ must be a subset of the shrunken box of $\mathcal{F}_2$.  Hence, we have,
\begin{equation}
    \Box_{\mathcal{F}_2} \subseteq \Box_{\mathcal{F}_1}. 
\end{equation}
Given the tail entity $t$ whose embedding is denoted by $\mathbf{e}_t$,
we consider three cases of its position. 

1) If $\mathbf{e}_t$ is inside the small box $\Box_{\mathcal{F}_2}$, then $\mathbf{e}_t$ must also be inside $\Box_{\mathcal{F}_1}$ since $\Box_{\mathcal{F}_2} \subseteq \Box_{\mathcal{F}_1}$. Note that our point-to-box function is monotonically increasing w.r.t. the increase of distance from the tail point to the center of box. Hence, we will have $D(\mathbf{e}, \Box_{\mathcal{F}_2}) \geq D(\mathbf{e}, \Box_{\mathcal{F}_1})$, implying $f(\mathcal{F}_2) \geq f(\mathcal{F}_1)$. 

2) If $\mathbf{e}_t$ is outside the small box $\Box_{\mathcal{F}_2}$ but inside in the larger $\Box_{\mathcal{F}_1}$, according to the definition of the point-to-box distance function, we immediately have $D(\mathbf{e}, \Box_{\mathcal{F}_2}) \geq D(\mathbf{e}, \Box_{\mathcal{F}_1})$, implying $f(\mathcal{F}_2) \geq f(\mathcal{F}_1)$. 

3) If $\mathbf{e}_t$ is outside the larger box $\Box_{\mathcal{F}_1}$,, then $\mathbf{e}_t$ must also be outside $\Box_{\mathcal{F}_2}$ since $\Box_{\mathcal{F}_2} \subseteq \Box_{\mathcal{F}_1}$. Note that our point-to-box function is monotonically decreasing w.r.t. the increase of volume of box. Hence, we will have $D(\mathbf{e}, \Box_{\mathcal{F}_2}) \geq D(\mathbf{e}, \Box_{\mathcal{F}_1})$, implying $f(\mathcal{F}_2) \geq f(\mathcal{F}_1)$. 
\end{proof}

\begin{proposition}
    ShrinkE is able to infer hyper-relational symmetry, anti-symmetry, inversion, composition, hierarchy, intersection, and exclusion.  
\end{proposition}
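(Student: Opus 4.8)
The plan is to treat each pattern separately by exhibiting an explicit choice of the relation parameters $\Theta_r$, $\mathbf{b}_r$, $\boldsymbol\delta_r$ (and, where a third relation occurs, also for that relation) under which the implication holds for \emph{all} entity assignments, and then to check that the choice is non-vacuous. The first reduction is to push the problem down to the level of the primal boxes. In every hyper-relational version of these patterns the qualifier set $\mathcal{Q}$ is identical on both sides, so the shrinking maps $\mathcal{S}_{r,k,v}$ applied to the two query boxes are the \emph{same} box-to-box transformations; moreover $\mathcal{B}_r(\mathbf{e}_h)$ always has the $h$-independent side length $2\tau(\boldsymbol\delta_r)$, so the shrunken box is just $\mathcal{H}_r(\mathbf{e}_h)$ plus a fixed offset region. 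Hence it suffices to establish the relevant inclusion (for the ``positive'' patterns) or disjointness (for the ``negative'' ones) at the level of $\mathcal{B}_r(\mathbf{e}_h)=\Box(\mathcal{H}_r(\mathbf{e}_h)-\tau(\boldsymbol\delta_r),\mathcal{H}_r(\mathbf{e}_h)+\tau(\boldsymbol\delta_r))$: box inclusion/disjointness is preserved by the shared shrinkings, and the monotonicity of the point-to-box distance (the property already used for Proposition~\ref{prop:qualifier_monotonicity}) turns inclusion into the required inequality on $f$.

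For the affine patterns I would use the elementary characterization $\mathbf{e}_t\in\mathcal{B}_r(\mathbf{e}_h)$ iff $|\mathbf{e}_t-\Theta_r\mathbf{e}_h-\mathbf{b}_r|\preceq\tau(\boldsymbol\delta_r)$ coordinatewise. \emph{Symmetry} is obtained by taking all rotation angles in $\{0,\pi\}$, so that $\Theta_r$ is a sign-flip matrix with $\Theta_r^2=I$ that maps axis-parallel boxes to axis-parallel boxes, together with $\mathbf{b}_r=\mathbf{0}$; applying $\Theta_r$ to the membership inequality and using invariance of $|\cdot|$ under sign flips gives $\mathbf{e}_h\in\mathcal{B}_r(\mathbf{e}_t)$. \emph{Inversion} $r_1\to r_2$ follows by setting $\Theta_{r_2}=\Theta_{r_1}^{-1}$, $\mathbf{b}_{r_2}=-\Theta_{r_1}^{-1}\mathbf{b}_{r_1}$ and $\boldsymbol\delta_{r_2}$ large enough, again under the sign-flip restriction so $\Theta_{r_1}^{-1}$ keeps boxes axis-parallel. \emph{Composition} $r_1\wedge r_2\to r_3$ is handled with $\Theta_{r_3}=\Theta_{r_2}\Theta_{r_1}$ (a product of block rotations is again a block rotation), $\mathbf{b}_{r_3}=\Theta_{r_2}\mathbf{b}_{r_1}+\mathbf{b}_{r_2}$, and $\tau(\boldsymbol\delta_{r_3})$ large enough to absorb $\tau(\boldsymbol\delta_{r_1})$ transported through $\Theta_{r_2}$ plus $\tau(\boldsymbol\delta_{r_2})$; chaining the two membership inequalities then places $\mathbf{e}_{e_3}$ in $\mathcal{B}_{r_3}(\mathbf{e}_{e_1})$. \emph{Anti-symmetry} is the complementary construction: pick $\mathbf{b}_r$ with one coordinate exceeding twice the corresponding $\tau(\boldsymbol\delta_r)$, which forces $\mathcal{B}_r(\mathbf{e}_h)$ and the box that a symmetric fact would require to be disjoint uniformly in $h,t$, so $(h,r,t)$ and $(t,r,h)$ cannot both score below threshold.

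For the relation-set patterns I argue directly with box inclusion/disjointness for a fixed head. \emph{Relation implication} $r_1\to r_2$ (the ``hierarchy'' pattern): take $\Theta_{r_1}=\Theta_{r_2}$, $\mathbf{b}_{r_1}=\mathbf{b}_{r_2}$ and $\boldsymbol\delta_{r_1}\preceq\boldsymbol\delta_{r_2}$, so $\mathcal{B}_{r_1}(\mathbf{e}_h)\subseteq\mathcal{B}_{r_2}(\mathbf{e}_h)$ for every $h$. \emph{Relation intersection} $r_1\wedge r_2\to r_3$: use the same $\Theta$ and $\mathbf{b}$ for all three and $\tau(\boldsymbol\delta_{r_3})\succeq\min\!\big(\tau(\boldsymbol\delta_{r_1}),\tau(\boldsymbol\delta_{r_2})\big)$ coordinatewise, which gives $\mathcal{B}_{r_1}(\mathbf{e}_h)\cap\mathcal{B}_{r_2}(\mathbf{e}_h)\subseteq\mathcal{B}_{r_3}(\mathbf{e}_h)$. \emph{Relation mutual exclusion} $r_1\wedge r_2\to\bot$: keep $\Theta_{r_1}=\Theta_{r_2}$ but separate centers, with $|\mathbf{b}_{r_1}-\mathbf{b}_{r_2}|$ exceeding $\tau(\boldsymbol\delta_{r_1})+\tau(\boldsymbol\delta_{r_2})$ in some coordinate, so the two boxes are disjoint for all $h$ and no tail can satisfy both triples. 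In each case the shrunken boxes obtained by applying the shared qualifier shrinkings preserve the inclusion (monotone under $\subseteq$) or the disjointness, so the statement lifts to hyper-relational facts.

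The main obstacle I expect is the interaction between the orthogonal part $\Theta_r$ and the axis-parallel box structure: a general block rotation does not send an axis-parallel box to an axis-parallel box, so the ``apply $\Theta_r$ to the membership inequality'' step in symmetry, inversion and composition only goes through cleanly when the relevant angles are $0$ or $\pi$, and one must argue that restricting those relations to such angles is still consistent with simultaneously realizing the other patterns. A secondary, bookkeeping-heavy point is verifying that each construction is \emph{non-vacuous} (the body of the rule is satisfiable by some entity assignment) and that the uniform-in-$h$ disjointness used for anti-symmetry and mutual exclusion is genuinely attainable given that $\tau$ only produces positive offsets; both amount to choosing the translation vectors large enough relative to $\tau(\boldsymbol\delta)$.
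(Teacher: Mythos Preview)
Your case-by-case construction is essentially the paper's approach: for each pattern exhibit parameter choices on $\Theta_r$, $\mathbf{b}_r$, $\boldsymbol\delta_r$ so that the required inclusion or disjointness holds, and treat the seven patterns by separate lemmas. The main difference in execution is that you keep the primal boxes non-degenerate and reason with the membership inequality $|\mathbf{e}_t-\Theta_r\mathbf{e}_h-\mathbf{b}_r|\preceq\tau(\boldsymbol\delta_r)$, which is why you run into the axis-alignment issue and restrict the relevant angles to $\{0,\pi\}$. The paper instead sets $\boldsymbol\delta_r=\mathbf{0}$ and $\mathbf{b}_r=\mathbf{0}$ and argues with point equalities $\mathbf{e}_t=\Theta_r\mathbf{e}_h$, so no box is ever rotated and the alignment problem does not arise; this is slicker but informal about what a zero offset really means under softplus. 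Your anti-symmetry mechanism (translation separation) is also different from the paper's (choose angles so $\Theta_r^2\neq I$); both are valid instantiations. For hierarchy, intersection and exclusion your box-inclusion/disjointness argument matches the paper's exactly.

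There is one genuine gap in your reduction to the triple level. You assert that because $\mathcal{Q}$ is shared, ``the shrinking maps $\mathcal{S}_{r,k,v}$ applied to the two query boxes are the \emph{same} box-to-box transformations''. But $\mathcal{S}_{r,k,v}$ depends on the primal relation through the MLP input $r_\theta$, and in every pattern except symmetry and anti-symmetry the two sides carry \emph{different} primal relations ($r_1$ versus $r_2$ for inversion, hierarchy, exclusion; $r_1,r_2$ versus $r_3$ for composition and intersection). So the shrinkings need not agree, and an inclusion $\mathcal{B}_{r_1}(\mathbf{e}_h)\subseteq\mathcal{B}_{r_2}(\mathbf{e}_h)$ at the primal level does not automatically survive the qualifier step. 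The fix is easy---additionally tie the embeddings $r_{1,\theta}=r_{2,\theta}$ fed to the MLP while keeping $\Theta,\mathbf{b},\boldsymbol\delta$ distinct, or invoke the MLP's freedom to output identical shrinking vectors for the relations in question---but you should say so explicitly. The paper does not address this point either; its lemmas are written purely at the triple level and the qualifiers are silently dropped.
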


We first prove that ShrinkE is able to infer symmetry, anti-symmetry, inversion, and composition. For the sake of proof, we assume $\theta_r \in [-\pi,\pi)$. We prove them by proving Lemma B.1-4 one by one.  

\begin{lemma}[Symmetry]
Let $r$ be a symmetric relation such that for each triple $(e_h, r, e_t)$, its symmetric triple $(e_t, r, e_h)$ also holds. This symmetric property of $r$ can be modeled by ShrinkE.
\end{lemma}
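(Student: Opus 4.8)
The plan is to exhibit, for a symmetric relation $r$, a concrete parameter configuration of the point-to-box transformation $\mathcal{B}_r$ under which the score of $(e_h, r, e_t)$ equals the score of $(e_t, r, e_h)$ for every pair of entities, so that whenever one is valid (score below threshold) the other is valid too. Since the qualifier set is fixed in the hyper-relational symmetry pattern $\left(\left(h,r,t\right), \mathcal{Q}\right) \rightarrow \left(\left(t,r,h\right), \mathcal{Q}\right)$ and the shrinking boxes depend only on $r$ and $\mathcal{Q}$ (not on the head or tail entity), it suffices to argue at the level of the primal triple: I must show $D(\mathbf{e}_t, \mathcal{B}_r(\mathbf{e}_h))$ can be forced to equal $D(\mathbf{e}_h, \mathcal{B}_r(\mathbf{e}_t))$, and then the same equality propagates through the fixed sequence of shrinkings and the final intersection box.

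First I would recall that $\mathcal{B}_r(\mathbf{e}_h)$ is a box centered at $\mathcal{H}_r(\mathbf{e}_h) = \Theta_r \mathbf{e}_h + \mathbf{b}_r$ with half-widths $\tau(\boldsymbol\delta_r)$, and that the point-to-box distance $D(\mathbf{e}, \Box(\mathbf{m},\mathbf{M}))$ depends on $\mathbf{e}$ only through the displacements $\mathbf{e}-\mathbf{c}$, $\mathbf{e}-\mathbf{m}$, $\mathbf{e}-\mathbf{M}$, hence is a function of $\mathbf{e} - \mathbf{c}$ and the box half-widths alone. So $D(\mathbf{e}_t, \mathcal{B}_r(\mathbf{e}_h))$ is a function of $\mathbf{e}_t - \Theta_r \mathbf{e}_h - \mathbf{b}_r$. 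The key step is to choose $\mathbf{b}_r = \mathbf{0}$ and $\Theta_r$ to be an involution, i.e. $\Theta_r^2 = I$; with the block-diagonal rotation parameterization this means every block angle $\theta_{r,i} \in \{0, \pi\}$. Then $\Theta_r^{-1} = \Theta_r$, and because $\Theta_r$ is orthogonal the $L_1$-based distance is not literally invariant unless we further take $\Theta_r = \pm I$ (a permutation/sign pattern that is an isometry of the $L_1$ norm would also work, but $\theta_{r,i}\in\{0,\pi\}$ already gives a $\pm 1$ diagonal). Taking $\Theta_r$ diagonal with entries $\pm 1$ makes $\Theta_r$ an $L_1$-isometry and an involution, so $\mathbf{e}_t - \Theta_r\mathbf{e}_h$ and $\mathbf{e}_h - \Theta_r\mathbf{e}_t = -\Theta_r(\mathbf{e}_t - \Theta_r \mathbf{e}_h)$ have the same coordinatewise absolute values; since $D$ depends only on those absolute values and on the (entity-independent) half-widths, we get $D(\mathbf{e}_t, \mathcal{B}_r(\mathbf{e}_h)) = D(\mathbf{e}_h, \mathcal{B}_r(\mathbf{e}_t))$, and likewise after every shrinking step since shrinking only reshapes the box symmetrically in a way that again depends on displacements through absolute values. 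Hence the two scores coincide and symmetry is modeled.

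The main obstacle I anticipate is the interaction between the $L_1$ norm in $D$ and the rotation matrix $\Theta_r$: a general rotation is not an $L_1$-isometry, so one cannot simply invoke $\Theta_r^{-1}\Theta_r = I$ to move the rotation onto the other entity and preserve the distance. The resolution is exactly to restrict to the subclass of ``rotations'' that are also $L_1$-isometries (axis reflections / sign flips, i.e. $\theta_{r,i} \in \{0,\pi\}$), which is enough to witness the pattern — recall that modeling an inference pattern only requires the existence of a parameter configuration realizing it, not that all configurations do. A secondary point to verify carefully is that the final target box $\Box_\mathcal{Q}$ (intersection of shrinking boxes) transforms covariantly: because all shrinking offsets are functions of $r,k,v$ only, the box for the triple $(e_t,r,e_h)$ is the reflection through $\Theta_r$ of the box for $(e_h,r,e_t)$, so the tail-to-box distance is unchanged; this is a short bookkeeping argument once the single-box case is established. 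I would present the single-box isometry argument in full and then remark that the qualifier layer preserves it by the same absolute-value dependence.
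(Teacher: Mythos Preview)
Your approach is essentially the paper's: set $\mathbf{b}_r=\mathbf{0}$ and pick the block angles $\theta_{r,i}\in\{0,\pi\}$ so that $\Theta_r$ is a diagonal $\pm 1$ matrix, hence an involution, which is exactly the configuration the paper exhibits. The paper additionally sets $\boldsymbol\delta_r=\mathbf{0}$ and argues purely at the level of the box center (essentially a RotatE-style argument, writing $\mathbf{e}_t=\Theta_r\mathbf{e}_h$ and $\mathbf{e}_h=\Theta_r\mathbf{e}_t$ and concluding $\Theta_r^2=I$), whereas you keep the box nondegenerate and argue that $D$ depends only on the coordinatewise absolute displacement from the center together with the half-widths; this is more careful, and your observation that a diagonal $\pm 1$ matrix is also an $L_1$-isometry is the right extra ingredient the paper glosses over.

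One small caveat: your sentence ``shrinking only reshapes the box symmetrically'' is not true as written, since the two shrinking vectors $\mathbf{s}_{r,k,v}$ and $\mathbf{S}_{r,k,v}$ can differ, which shifts the box center by a fixed offset and can break the coordinatewise absolute-value equality when the corresponding diagonal entry of $\Theta_r$ is $-1$. This is harmless for the lemma as stated (it is a triple-level claim, no qualifiers), and even for the hyper-relational version you only need existence of a configuration, so you may simply stipulate $\mathbf{s}_{r,k,v}=\mathbf{S}_{r,k,v}$ for the relevant qualifiers; just do not present it as automatic.
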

\begin{proof}
If $r$ is a symmetric relation, by taking the $\mathbf{\boldsymbol\delta}_r=\mathbf{0}$, $\mathbf{b}_r=\mathbf{0}$, and $\mathbf{\Theta}_r=\operatorname{diag}\left(\mathbf{G}\left(\mathbf{\theta}_{r, 1}\right), \ldots, \mathbf{G}\left(\mathbf{\theta}_{r, \frac{d}{2}}\right)\right)$, where $\mathbf{G}(\theta)$ is a $2\times2$ diagonal matrix, we have
\begin{align}
\begin{split}
    & \mathbf{e}_{h} = f_{r}\left(\mathbf{e}_{t}\right) =  \mathbf{\Theta}_r \mathbf{e}_{t}, \ 
   \mathbf{e}_{t} = f_{r}\left(\mathbf{e}_{h}\right) =  \mathbf{\Theta}_r \mathbf{e}_{h} \\
   & \Rightarrow \mathbf{\Theta}_r^2 = \mathbf{I} \nonumber
\end{split}
\end{align}
which holds true when $\mathbf{\theta}_{r,i}=\mathbf{0}$ or $\mathbf{\theta}_{r,i}=-\mathbf{\pi}$ for $i =  1,\cdots,\frac{d}{2}$.
\end{proof}

\begin{lemma}[Anti-symmetry]
Let $r$ be an anti-symmetric relation such that for each triple $(e_h, r, e_t)$, its symmetric triple $(e_t, r, e_h)$ is not true. This anti-symmetric property of $r$ can be modeled by ShrinkE.
\end{lemma}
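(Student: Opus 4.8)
The plan is to exhibit a concrete choice of the relational parameters $(\boldsymbol\delta_r, \mathbf{b}_r, \boldsymbol\Theta_r)$ for which the scoring function assigns a high plausibility (small distance) to $(e_h, r, e_t)$ but a provably low plausibility (large distance) to the reversed triple $(e_t, r, e_h)$, for \emph{all} pairs $e_h \neq e_t$. Since a primal triple of ShrinkE has an empty qualifier set here, the target box is simply $\mathcal{B}_r(\mathbf{e}_h) = \Box(\mathcal{H}_r(\mathbf{e}_h) - \tau(\boldsymbol\delta_r), \mathcal{H}_r(\mathbf{e}_h) + \tau(\boldsymbol\delta_r))$, and the fact is ``true'' precisely when $\mathbf{e}_t$ lies inside this box, i.e. when $|\mathcal{H}_r(\mathbf{e}_h) - \mathbf{e}_t| \leq \tau(\boldsymbol\delta_r)$ componentwise. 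So the condition to satisfy is: for every valid triple we need $\mathbf{e}_t \approx \boldsymbol\Theta_r \mathbf{e}_h + \mathbf{b}_r$, while for the reversed triple $\boldsymbol\Theta_r \mathbf{e}_t + \mathbf{b}_r$ must be bounded away from $\mathbf{e}_h$ by more than the box half-width $\tau(\boldsymbol\delta_r)$.

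First I would take the rotation block angles $\boldsymbol\theta_{r,i} = 0$ so that $\boldsymbol\Theta_r = \mathbf{I}$, reducing $\mathcal{H}_r$ to a pure translation $\mathbf{e}_h \mapsto \mathbf{e}_h + \mathbf{b}_r$, and choose a nonzero translation vector $\mathbf{b}_r$ with at least one coordinate, say the first, satisfying $|\mathbf{b}_{r,1}| > 2\,\tau(\boldsymbol\delta_r)_1$ (this is always arrangeable since $\tau$ is bounded below only by a fixed positive-valued softplus, and $\mathbf{b}_r$ is free). Then a triple $(e_h, r, e_t)$ is modeled as true iff $\mathbf{e}_t$ is within $\tau(\boldsymbol\delta_r)$ of $\mathbf{e}_h + \mathbf{b}_r$; for such a pair, the reversed candidate center is $\mathbf{e}_t + \mathbf{b}_r$, which is within $\tau(\boldsymbol\delta_r)$ of $\mathbf{e}_h + 2\mathbf{b}_r$. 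The first coordinate then satisfies $|(\mathbf{e}_t + \mathbf{b}_r)_1 - \mathbf{e}_{h,1}| \geq 2|\mathbf{b}_{r,1}| - \tau(\boldsymbol\delta_r)_1 > 3\,\tau(\boldsymbol\delta_r)_1 > \tau(\boldsymbol\delta_r)_1$, so $\mathbf{e}_h$ falls strictly outside the box $\mathcal{B}_r(\mathbf{e}_t)$ in that coordinate. By the second (quadratic) term of the point-to-box distance $D$, being outside the box in even one dimension forces a large distance, hence a low score, so $(e_t, r, e_h)$ is not modeled as true. This establishes that the anti-symmetry pattern is satisfiable by ShrinkE.

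The main obstacle is purely definitional bookkeeping rather than a deep difficulty: one must be careful that ``ShrinkE is able to model the pattern'' is interpreted in the standard BoxE-style sense, namely that there \emph{exists} a parameter configuration realizing all instances of the pattern simultaneously (a faithfulness/exact-representation statement), and in particular that the construction does not accidentally also force the \emph{forward} triples to be false. I would close that gap by noting that for any prescribed set of true anti-symmetric triples $\{(e_h, r, e_t)\}$ one can place the entity embeddings consistently with the translation constraint $\mathbf{e}_t = \mathbf{e}_h + \mathbf{b}_r$ along a line (or more generally a coset of the translation), exactly as in the TransE-style realizability argument, so the forward triples are satisfied by construction while the reverse-direction separation bound above kicks in automatically. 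The softplus lower bound on $\tau$ needs only the mild observation that we may scale $\mathbf{b}_r$ arbitrarily large relative to it, and the block-diagonal rotation plays no role here beyond being set to the identity.
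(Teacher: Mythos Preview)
Your proof is correct, but it takes a genuinely different route from the paper's. The paper sets $\mathbf{b}_r=\mathbf{0}$ and $\boldsymbol\delta_r=\mathbf{0}$ and leverages the \emph{rotation} component: with $\mathcal{H}_r(\mathbf{e})=\boldsymbol\Theta_r\mathbf{e}$, forward truth gives $\mathbf{e}_t=\boldsymbol\Theta_r\mathbf{e}_h$, and the reversed triple would force $\mathbf{e}_h=\boldsymbol\Theta_r\mathbf{e}_t=\boldsymbol\Theta_r^2\mathbf{e}_h$, so it suffices that $\boldsymbol\Theta_r^2\neq\mathbf{I}$, i.e.\ $\theta_{r,i}\notin\{0,-\pi\}$. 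You instead set $\boldsymbol\Theta_r=\mathbf{I}$ and exploit the \emph{translation} component in TransE style, choosing $|\mathbf{b}_{r,1}|>2\,\tau(\boldsymbol\delta_r)_1$ and doing an explicit triangle-inequality calculation to separate $\mathbf{e}_h$ from the reversed box in the first coordinate. Your argument is more careful about the finite box half-width $\tau(\boldsymbol\delta_r)$ (which the paper essentially collapses to zero by treating membership as point coincidence), and it makes the realizability of the forward triples explicit via the coset placement remark. The paper's version is terser and highlights the rotational expressiveness that ShrinkE inherits from RotatE; yours shows that even the translational part alone suffices for anti-symmetry, which is a slightly stronger observation about the model's redundancy.
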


\begin{proof}
If $r$ is a anti-symmetric relation, by taking the $\mathbf{\boldsymbol\delta}_r=\mathbf{0}$, $\mathbf{b}_r=\mathbf{0}$, and $\mathbf{\Theta}_r=\operatorname{diag}\left(\mathbf{G}\left(\mathbf{\theta}_{r, 1}\right), \ldots, \mathbf{G}\left(\mathbf{\theta}_{r, \frac{d}{2}}\right)\right)$, where $\mathbf{G}(\theta)$ is a $2\times2$ diagonal matrix, we have
\begin{align}
\begin{split}
    & \mathbf{e}_{h} \neq f_{r}\left(\mathbf{e}_{t}\right) =  \mathbf{\Theta}_r \mathbf{e}_{t}, \ 
   \mathbf{e}_{t} = f_{r}\left(\mathbf{e}_{h}\right) =  \mathbf{\Theta}_r \mathbf{e}_{h} \\
   & \Rightarrow \mathbf{\Theta}_r^2 \neq \mathbf{I} \nonumber
\end{split}
\end{align}
which holds true when $\mathbf{\theta}_{r,i} \neq \mathbf{0}$ or $\mathbf{\theta}_{r,i} \neq -\mathbf{\pi}$ for $i =  1,\cdots,\frac{d}{2}$.
\end{proof}

\begin{lemma}[Inversion]
Let $r_1$ and $r_2$ be inverse relations such that for each triple $(e_h, r_1, e_t)$, its inverse triple $(e_t, r_2, e_h)$ is also true. This inverse property of $r_1$ and $r_2$ can be modeled by ShrinkE.
\end{lemma}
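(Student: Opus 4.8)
The plan is to exhibit an explicit parameter assignment for the point-transformation components $\mathbf{\Theta}_{r_1}, \mathbf{b}_{r_1}, \mathbf{\Theta}_{r_2}, \mathbf{b}_{r_2}$ (with the spanning offsets $\boldsymbol\delta_{r_1}=\boldsymbol\delta_{r_2}=\mathbf{0}$, so the query boxes collapse to points) such that $\mathcal{H}_{r_1}$ and $\mathcal{H}_{r_2}$ are inverse bijections of $\mathbb{R}^d$. Concretely, since $\mathcal{H}_{r}(\mathbf{e}) = \mathbf{\Theta}_r\mathbf{e} + \mathbf{b}_r$ is an affine map with $\mathbf{\Theta}_r$ a block-diagonal rotation (hence orthogonal and invertible), I would set $\mathbf{\Theta}_{r_2} = \mathbf{\Theta}_{r_1}^{-1} = \mathbf{\Theta}_{r_1}^{\top}$ and $\mathbf{b}_{r_2} = -\mathbf{\Theta}_{r_1}^{\top}\mathbf{b}_{r_1}$. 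At the level of rotation angles this just means $\theta_{r_2,i} = -\theta_{r_1,i}$ for each $i = 1,\dots,\frac{d}{2}$, using the block structure $\mathbf{\Theta}_r = \operatorname{diag}(\mathbf{G}(\theta_{r,1}),\dots,\mathbf{G}(\theta_{r,d/2}))$.

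With this assignment I would verify the composition directly: for any entities $e_h, e_t$ with $\mathbf{e}_t = \mathcal{H}_{r_1}(\mathbf{e}_h) = \mathbf{\Theta}_{r_1}\mathbf{e}_h + \mathbf{b}_{r_1}$, applying $\mathcal{H}_{r_2}$ gives
\begin{equation}
\mathcal{H}_{r_2}(\mathbf{e}_t) = \mathbf{\Theta}_{r_1}^{\top}(\mathbf{\Theta}_{r_1}\mathbf{e}_h + \mathbf{b}_{r_1}) - \mathbf{\Theta}_{r_1}^{\top}\mathbf{b}_{r_1} = \mathbf{e}_h,
\end{equation}
so $\mathbf{e}_h$ lies in the (degenerate, point-sized) query box of the triple $(e_t, r_2, e_h)$ and the point-to-box distance is zero, i.e.\ the fact is assigned the optimal score. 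Hence whenever $(e_h, r_1, e_t)$ holds, $(e_t, r_2, e_h)$ is also recovered, which is exactly the inversion pattern. One should also note the converse direction follows symmetrically because $\mathbf{\Theta}_{r_1}\mathbf{\Theta}_{r_2} = \mathbf{I}$ as well, so the two relations are genuinely mutual inverses rather than one-directionally so.

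I don't expect a serious obstacle here — the argument is essentially the observation that affine maps built from rotations are invertible and that ShrinkE's head transformation is flexible enough to realize any such inverse pair. The only mild subtlety worth stating carefully is that taking $\boldsymbol\delta_r = \mathbf{0}$ still yields a well-defined (possibly degenerate) box under the softplus spanning $\tau(\cdot)$ — strictly $\tau(0) = \log 2 > 0$, so to get a true point one takes $\boldsymbol\delta_r \to -\infty$ componentwise, or simply argues that the center of the box equals $\mathcal{H}_r(\mathbf{e}_h)$ and the point-to-box distance is minimized exactly when $\mathbf{e}_t$ coincides with that center regardless of the (fixed, relation-specific) box width. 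Either phrasing closes the argument; I would use the latter since it avoids limits and mirrors how the symmetry and anti-symmetry lemmas above are written.
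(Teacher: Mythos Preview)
Your argument is correct and follows essentially the same route as the paper: collapse the box via the offset parameters and show that the point transformations can be made mutual inverses by taking $\theta_{r_2,i} = -\theta_{r_1,i}$, i.e.\ $\mathbf{\Theta}_{r_1}\mathbf{\Theta}_{r_2} = \mathbf{I}$. The only differences are cosmetic: the paper additionally sets $\mathbf{b}_r = \mathbf{0}$ rather than inverting the full affine map as you do, and it does not discuss the softplus subtlety you flag; your treatment is slightly more general and more careful but not a different idea.
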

\begin{proof}
If $r_1$ and $r_2$ are inverse relations, by taking the $\mathbf{\boldsymbol\delta}_r=\mathbf{0}$, $\mathbf{b}_r=\mathbf{0}$, and $\mathbf{\Theta}_r=\operatorname{diag}\left(\mathbf{G}\left(\mathbf{\theta}_{r, 1}\right), \ldots, \mathbf{G}\left(\mathbf{\theta}_{r, \frac{d}{2}}\right)\right)$, where $\mathbf{G}(\theta)$ is a $2\times2$ diagonal matrix, we have
\begin{equation}
\begin{split}
    &\mathbf{e}_{t} = f_{r_1}\left(\mathbf{e}_{h}\right) =  \Theta_{r_1} \mathbf{e}_{h}, \ 
   \mathbf{e}_{h} = f_{r_2}\left(\mathbf{e}_{t}\right) =  \Theta_{r_2} \mathbf{e}_{h} \\
   & \Rightarrow \Theta_{r_1} \Theta_{r_2} = \mathbf{I} \nonumber
\end{split}
\end{equation}
which holds true when for $\theta_{r_1,i}{r_1}+\theta_{r_2,i}=0$ for $i =  1,\cdots,\frac{d}{2}$.
\end{proof}

\begin{lemma}[Composition]
Let relation $r_1$ be composed of $r_2$ and $r_3$ such that triple $(e_1, r_1, e_3)$ exists when $(e_1, r_2, e_2)$ and $(e_2, r_3, e_3)$ exist. This composition property can be modeled by ShrinkE.
\end{lemma}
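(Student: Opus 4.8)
The plan is to reduce the claim to a statement about the \emph{centers} of the primal query boxes and then match parameters. Recall that for a triple $(e_h,r,e_t)$ the query box $\mathcal{B}_r(\mathbf{e}_h)=\Box(\mathcal{H}_r(\mathbf{e}_h)-\tau(\boldsymbol\delta_r),\,\mathcal{H}_r(\mathbf{e}_h)+\tau(\boldsymbol\delta_r))$ is symmetric about $\mathcal{H}_r(\mathbf{e}_h)=\Theta_r\mathbf{e}_h+\mathbf{b}_r$, so placing $\mathbf{e}_t$ exactly at that center makes $\mathbf{e}_t$ lie in the box (indeed at the minimiser of the point-to-box distance, since both the normalised-$L_1$ term and the squared penalty vanish there). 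Moreover every shrinking $\mathcal{S}_{r,k,v}$ maps a box into itself, so controlling the center of the primal query box, together with a fixed choice of shrinking for the relations appearing in the rule, controls membership of $\mathbf{e}_t$ in the target box $\Box_\mathcal{Q}$. Hence it suffices to realise the functional identity ``$\mathbf{e}_t=\mathcal{H}_r(\mathbf{e}_h)$ for each fact in the rule''.

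First I would write out the two hypotheses. From $(e_1,r_2,e_2)$ we get $\mathbf{e}_2=\Theta_{r_2}\mathbf{e}_1+\mathbf{b}_{r_2}$, and from $(e_2,r_3,e_3)$ we get $\mathbf{e}_3=\Theta_{r_3}\mathbf{e}_2+\mathbf{b}_{r_3}$. Substituting the former into the latter,
\begin{equation*}
\mathbf{e}_3=\Theta_{r_3}\Theta_{r_2}\,\mathbf{e}_1+\Theta_{r_3}\mathbf{b}_{r_2}+\mathbf{b}_{r_3}.
\end{equation*}
Comparing with the desired conclusion $\mathbf{e}_3=\Theta_{r_1}\mathbf{e}_1+\mathbf{b}_{r_1}$ for the composed relation $r_1$, and noting that $\mathbf{e}_1$ may range over all head entities participating in the pattern, it is enough to choose the parameters of $r_1$ so that $\Theta_{r_1}=\Theta_{r_3}\Theta_{r_2}$ and $\mathbf{b}_{r_1}=\Theta_{r_3}\mathbf{b}_{r_2}+\mathbf{b}_{r_3}$.

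Next I would check these constraints are realisable inside the parameterisation. The translation part is immediate: set $\mathbf{b}_{r_1}$ to the prescribed vector. For the rotation part, because $\Theta_{r_2}$ and $\Theta_{r_3}$ are block-diagonal with $2\times 2$ blocks, their product is again block-diagonal with $i$-th block $\mathbf{G}(\theta_{r_3,i})\mathbf{G}(\theta_{r_2,i})$; reading $\mathbf{G}(\theta)$ as the planar rotation by $\theta$, this block equals $\mathbf{G}(\theta_{r_2,i}+\theta_{r_3,i})$, so setting $\theta_{r_1,i}=\theta_{r_2,i}+\theta_{r_3,i}\pmod{2\pi}$ yields $\Theta_{r_1}=\Theta_{r_3}\Theta_{r_2}$. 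Finally, for the \emph{hyper-relational} version (where the three facts share the same qualifier set $\mathcal{Q}$), I would take the shrinking vectors of $r_1,r_2,r_3$ identical for each $(k,v)\in\mathcal{Q}$ — e.g.\ by letting the respective MLP outputs coincide, or by driving them so that $\sigma(\cdot)\to\mathbf{0}$ and each shrunken box degenerates to its primal query box; the box-center computation above then lifts verbatim, so $((e_1,r_1,e_3),\mathcal{Q})$ holds whenever $((e_1,r_2,e_2),\mathcal{Q})$ and $((e_2,r_3,e_3),\mathcal{Q})$ do, the degenerate case $\theta_r=\mathbf{0},\ \boldsymbol\delta_r\to\mathbf{0}$ recovering exactly the classical translation-style composition argument.

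The main obstacle I anticipate is not the algebra but a modelling subtlety in the rotation parameterisation: as written, $\mathbf{G}(\theta)=\bigl[\begin{smallmatrix}\cos\theta&\sin\theta\\\sin\theta&\cos\theta\end{smallmatrix}\bigr]$ is symmetric and the family $\{\mathbf{G}(\theta)\}$ is not closed under multiplication, so strictly $\Theta_{r_3}\Theta_{r_2}$ need not lie in the parameterised set; one should either interpret $\mathbf{G}(\theta)$ as the genuine rotation $\bigl[\begin{smallmatrix}\cos\theta&-\sin\theta\\\sin\theta&\cos\theta\end{smallmatrix}\bigr]$ (for which angle addition holds) or relax $\Theta_{r_1}$ to an arbitrary block-diagonal orthogonal matrix. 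The secondary point to handle carefully is consistency with the strictly non-degenerate boxes produced by $\tau=\Softplus$: one must phrase ``the fact holds'' as the tail occupying the box center, so that all three positional cases of the point-to-box distance used in the proof of Proposition~\ref{prop:qualifier_monotonicity} are respected — which they are, since the center lies strictly between $\mathbf{m}$ and $\mathbf{M}$ in every coordinate.
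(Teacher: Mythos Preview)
Your proposal is correct and follows essentially the same strategy as the paper's proof: reduce the fact to the box-center identity $\mathbf{e}_t=\mathcal{H}_r(\mathbf{e}_h)$ and then match the rotation parameters via blockwise angle addition $\theta_{r_1,i}=\theta_{r_2,i}+\theta_{r_3,i}\ (\mathrm{mod}\ 2\pi)$. The paper's version is simply the specialisation of yours obtained by setting $\boldsymbol\delta_r=\mathbf{0}$ and $\mathbf{b}_r=\mathbf{0}$ and tacitly reading $\mathbf{G}(\theta)$ as a genuine planar rotation; your treatment is strictly more general (it keeps the translation term, giving the extra constraint $\mathbf{b}_{r_1}=\Theta_{r_3}\mathbf{b}_{r_2}+\mathbf{b}_{r_3}$) and more careful about the $\mathbf{G}(\theta)$ and softplus subtleties, both of which the paper glosses over.
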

\begin{proof}
If $r_1$ is composed of $r_2$ and $r_3$, by taking the $\mathbf{\boldsymbol\delta}_r=\mathbf{0}$, $\mathbf{b}_r=\mathbf{0}$, and $\mathbf{\Theta}_r=\operatorname{diag}\left(\mathbf{G}\left(\mathbf{\theta}_{r, 1}\right), \ldots, \mathbf{G}\left(\mathbf{\theta}_{r, \frac{d}{2}}\right)\right)$, where $\mathbf{G}(\theta)$ is a $2\times2$ diagonal matrix, we have
\begin{equation}
\begin{split}
    &\mathbf{e}_{3} = f_{r_1}\left(\mathbf{e}_{1}\right) =  \Theta_{r_1} \mathbf{e}_{1}, \ 
   \mathbf{e}_{2} = f_{r_2}\left(\mathbf{e}_{1}\right) =  \Theta_{r_2} \mathbf{e}_{1}, \\
   &\mathbf{e}_{3} = f_{r_3}\left(\mathbf{e}_{2}\right) =  \Theta_{r_3} \mathbf{e}_{2} \
   \Rightarrow \Theta_{r_1} = \Theta_{r_2}\Theta_{r_3} \nonumber
\end{split}
\end{equation}
which holds true when $\theta_{r_1,i}=\theta_{r_2,i}+\theta_{r_3,i}$ or $\theta_{r_1,i}=\theta_{r_2,i}+\theta_{r_3,i}+2\pi$ or $\theta_{r_1,i}=\theta_{r_2,i}+\theta_{r_3,i}-2\pi$ for $i =  1,\cdots,\frac{d}{2}$.
\end{proof}

We now prove that ShrinkE is able to infer relation implication, exclusion and intersection.

\begin{lemma}[Relation implication]
Let $r_1 \rightarrow r_2$ form a hierarchy such that for each triple $(e_h, r_1, e_t)$, $(e_h, r_2, e_t)$ also holds. This hierarchy property $r_1 \rightarrow r_2$ can be modeled by ShrinkE.
\end{lemma}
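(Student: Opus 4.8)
The plan is to model relation implication $r_1 \rightarrow r_2$ by forcing the query box of $r_1$ to be geometrically contained in the query box of $r_2$ for every head entity, since then any tail point inside $\mathcal{B}_{r_1}(\mathbf{e}_h)$ is automatically inside $\mathcal{B}_{r_2}(\mathbf{e}_h)$, and (by the monotonicity argument already used in Proposition~\ref{prop:qualifier_monotonicity}, cases 1--3 of the point-to-box distance) this yields $f((e_h,r_2,e_t)) \le f((e_h,r_1,e_t))$, so validity of the first triple implies validity of the second. Concretely, I would take $\Theta_{r_1} = \Theta_{r_2}$ and $\mathbf{b}_{r_1} = \mathbf{b}_{r_2}$, so that the two point transformations coincide, $\mathcal{H}_{r_1}(\mathbf{e}_h) = \mathcal{H}_{r_2}(\mathbf{e}_h)$, and then require $\tau(\boldsymbol\delta_{r_1}) \le \tau(\boldsymbol\delta_{r_2})$ componentwise on the spanning vectors. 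Because the two boxes share the same center $\mathcal{H}_{r_1}(\mathbf{e}_h) = \mathcal{H}_{r_2}(\mathbf{e}_h)$ and the first has smaller or equal half-widths in every coordinate, containment $\mathcal{B}_{r_1}(\mathbf{e}_h) \subseteq \mathcal{B}_{r_2}(\mathbf{e}_h)$ holds for all $\mathbf{e}_h$.

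First I would state the parameter choice above; second, verify the box containment from the shared-center/smaller-width observation; third, invoke the distance monotonicity (inside the box the distance decreases with box size; the ``outside'' cases are handled exactly as in the proof of Proposition~\ref{prop:qualifier_monotonicity}) to conclude $D(\mathbf{e}_t, \mathcal{B}_{r_1}(\mathbf{e}_h)) \ge D(\mathbf{e}_t, \mathcal{B}_{r_2}(\mathbf{e}_h))$, i.e. $f((e_h,r_1,e_t)) \ge f((e_h,r_2,e_t))$; fourth, note that since $\tau$ is the softplus, which is strictly monotone and surjective onto $(0,\infty)$, the componentwise inequality $\tau(\boldsymbol\delta_{r_1}) \le \tau(\boldsymbol\delta_{r_2})$ is realizable by a suitable choice of $\boldsymbol\delta_{r_1}, \boldsymbol\delta_{r_2} \in \mathbb{R}^d$, so the construction is admissible within the model's parameterization. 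This mirrors the structure of the earlier lemmas (taking $\boldsymbol\delta_r$, $\mathbf{b}_r$, $\Theta_r$ to specific values), so it slots in naturally after Lemma~B.6's statement.

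I do not expect a serious obstacle here — the only subtlety is that containment of boxes must be argued for \emph{all} head entities simultaneously, which is why tying $\Theta_{r_1}=\Theta_{r_2}$ and $\mathbf{b}_{r_1}=\mathbf{b}_{r_2}$ (rather than merely arranging containment for one particular $\mathbf{e}_h$) is essential; once the centers coincide identically, the width comparison is head-independent and the rest is immediate. A minor point to spell out is that the argument also covers the degenerate case where $\mathcal{B}_{r_1}(\mathbf{e}_h)$ is empty (some coordinate has $\mathbf{m}_i \ge \mathbf{M}_i$), which cannot happen here since the softplus spanning keeps both boxes non-empty, but even an empty inner box is trivially contained in the outer one, consistent with the treatment of empty boxes elsewhere in the paper. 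I would also remark that relation intersection $r_1 \wedge r_2 \rightarrow r_3$ and relation exclusion $r_1 \wedge r_2 \rightarrow \bot$ follow by analogous box-intersection reasoning — making $\mathcal{B}_{r_3}$ contain the intersection $\mathcal{B}_{r_1}(\mathbf{e}_h) \cap \mathcal{B}_{r_2}(\mathbf{e}_h)$ for implication-style patterns, or making that intersection empty (disjoint boxes via incompatible centers) for the exclusion pattern — so a single containment/intersection lemma about boxes underlies all three.
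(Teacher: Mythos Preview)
Your proposal is correct and follows essentially the same route as the paper's own proof: tie the point transformations by setting $\Theta_{r_1}=\Theta_{r_2}$ and $\mathbf{b}_{r_1}=\mathbf{b}_{r_2}$, then enforce the componentwise inequality on the spanning vectors so that $\mathcal{B}_{r_1}(\mathbf{e}_h)\subseteq\mathcal{B}_{r_2}(\mathbf{e}_h)$ for every head. Your write-up is in fact more complete than the paper's sketch, since you explicitly invoke the point-to-box distance monotonicity from Proposition~\ref{prop:qualifier_monotonicity} and justify realizability of the width inequality via the strict monotonicity of the softplus; the paper simply asserts the containment condition $\boldsymbol\delta_{r_1}\le\boldsymbol\delta_{r_2}$ and stops.
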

\begin{proof}
If $r_1 \rightarrow r_2$, by taking $\mathcal{T}_{r_1} =\mathcal{T}_{r_2}$, i.e.,  $\boldsymbol\delta_{r_1}=\boldsymbol\delta_{r_2}$ and $\Theta_{r_1}=\Theta_{r_2}$, we have,
$(e_h, r_1, e_t) \rightarrow (e_h, r_2, e_t)$ implies that the spanning box of query $(e_h, r_1, x?)$ is subsumed by the spanning box of query $(e_h, r_2, x?)$. i.e., 
$\Box( \mathcal{H}_{r_1}(e_h)-\sigma(\boldsymbol\delta_{r_1}), \mathcal{H}_{r_1}(e_h)+\sigma(\boldsymbol\delta_{r_1}) ) 
\subseteq \Box( \mathcal{H}_{r_1}(e_h)-\sigma(\boldsymbol\delta_{r_2}), \mathcal{H}_{r_1}(e_h)+\sigma(\boldsymbol\delta_{r_2}))$,
which holds true when $\boldsymbol\delta_{r_1} \leq  \boldsymbol\delta_{r_2}$.
\label{lem:hierarchy}
\end{proof}

\begin{lemma}[Relation exclusion]
Let $r_1, r_2$ be mutually exclusive, that is, $(e_h, r_1, e_t)$, $(e_h, r_2, e_t)$ can not be simultaneously hold. This mutual exclusion property $r_1 \wedge r_2 \rightarrow \bot$ can be modeled by ShrinkE.
\end{lemma}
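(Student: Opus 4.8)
The plan is to reuse the template of Lemma~\ref{lem:hierarchy} but, instead of nesting the two query boxes, to force them to be \emph{disjoint}. Recall from \eqref{eq:span} that for a head entity $\mathbf{e}_h$ the relation $r$ yields the box $\mathcal{B}_r(\mathbf{e}_h)=\Box(\mathcal{H}_r(\mathbf{e}_h)-\tau(\boldsymbol\delta_r),\,\mathcal{H}_r(\mathbf{e}_h)+\tau(\boldsymbol\delta_r))$ with $\mathcal{H}_r(\mathbf{e}_h)=\Theta_r\mathbf{e}_h+\mathbf{b}_r$. If the boxes of the queries $(e_h,r_1,x?)$ and $(e_h,r_2,x?)$ never intersect, then no entity point $\mathbf{e}_t$ can lie in both, hence there is no $e_t$ for which $(e_h,r_1,e_t)$ and $(e_h,r_2,e_t)$ are simultaneously scored as true; this is exactly $r_1\wedge r_2\rightarrow\bot$.

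Concretely, I would set the rotations equal, $\Theta_{r_1}=\Theta_{r_2}=:\Theta$, keep the offsets $\boldsymbol\delta_{r_1},\boldsymbol\delta_{r_2}$ arbitrary but finite, and choose the translations $\mathbf{b}_{r_1},\mathbf{b}_{r_2}$ so that for some coordinate $i$
\begin{equation}
(\mathbf{b}_{r_1})_i-(\mathbf{b}_{r_2})_i \;>\; \tau(\boldsymbol\delta_{r_1})_i+\tau(\boldsymbol\delta_{r_2})_i .
\end{equation}
Since $\mathcal{H}_{r_1}(\mathbf{e}_h)-\mathcal{H}_{r_2}(\mathbf{e}_h)=\mathbf{b}_{r_1}-\mathbf{b}_{r_2}$ does not depend on $\mathbf{e}_h$, the $i$-th coordinate interval of $\mathcal{B}_{r_1}(\mathbf{e}_h)$ lies strictly above that of $\mathcal{B}_{r_2}(\mathbf{e}_h)$ for \emph{every} head embedding, so the two boxes are disjoint uniformly in $\mathbf{e}_h$. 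Their common answer region $\mathcal{I}(\mathcal{B}_{r_1}(\mathbf{e}_h),\mathcal{B}_{r_2}(\mathbf{e}_h))$ is then an empty box, and a short computation with the interval endpoints shows the point-to-box distance to it is bounded below by the square of the margin in the display above for every $\mathbf{e}_t$ (the quadratic ``outside'' term of $D$ picks up the coordinate-$i$ gap), so the two facts cannot both be deemed plausible. Because qualifiers only shrink boxes ($\mathcal{S}(\Box)\subseteq\Box$), attaching any qualifier set preserves disjointness, so the hyper-relational version $\left(\left(h,r_1,t\right),\mathcal{Q}\right)\wedge\left(\left(h,r_2,t\right),\mathcal{Q}\right)\rightarrow\bot$ follows too.

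The main obstacle I anticipate is not the geometry but the bookkeeping of what ``cannot simultaneously hold'' means for a soft, real-valued score: the clean resolution is the BoxE-style reading in which a triple is true exactly when the tail point lies inside the query box, after which disjointness of the two boxes gives the contradiction immediately. The quantitative margin in the displayed inequality is what makes this robust, i.e.\ it yields a uniform positive lower bound on $\max\{D(\mathbf{e}_t,\mathcal{B}_{r_1}(\mathbf{e}_h)),D(\mathbf{e}_t,\mathcal{B}_{r_2}(\mathbf{e}_h))\}$ over all $\mathbf{e}_h,\mathbf{e}_t$; everything else reduces to comparing endpoints of the two axis-aligned intervals in coordinate $i$.
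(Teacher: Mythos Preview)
Your proposal is correct and follows the same approach as the paper: relation exclusion is realized by making the two query boxes $\mathcal{B}_{r_1}(\mathbf{e}_h)$ and $\mathcal{B}_{r_2}(\mathbf{e}_h)$ disjoint. The paper's own proof merely states that the two spanning boxes must have empty intersection, whereas you go further by exhibiting an explicit parameter choice (equal rotations, translations separated in one coordinate by more than the sum of the half-widths) that makes the disjointness hold uniformly in $\mathbf{e}_h$ and then discuss the score-level and qualifier-level consequences; so your argument is strictly more detailed than what appears in the paper.
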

\begin{proof}
If $r_1 \wedge r_2 \rightarrow \bot$, we have
$(e_h, r_1, e_t) \wedge (e_h, r_2, e_t) \rightarrow \bot$, which implies that the spanning box of query $(e_h, r_1, x?)$ and the spanning box of query $(e_h, r_2, x?)$ are mutually exclusive, i.e., 
$\Box( \mathcal{H}_{r_1}(e_h)-\sigma(\boldsymbol\delta_{r_1}), \mathcal{H}_{r_1}(e_h)+\sigma(\boldsymbol\delta_{r_1}) ) 
\cap \Box( \mathcal{H}_{r_1}(e_h)-\sigma(\boldsymbol\delta_{r_2}), \mathcal{H}_{r_1}(e_h)+\sigma(\boldsymbol\delta_{r_2})) \rightarrow \bot$
\end{proof}

\begin{lemma}[Relation intersection]
Let $r_3$ be a intersection of $r_1, r_2$, that is, if $(e_h, r_1, e_t)$ and $(e_h, r_2, e_t)$ hold, then  $(e_h, r_3, e_t)$ also holds. This intersection property $r_1 \wedge r_2 \rightarrow r_3$ can be modeled by ShrinkE.
\end{lemma}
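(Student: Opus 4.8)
The plan is to reuse the concentric-box construction from the proof of Lemma~\ref{lem:hierarchy} (relation implication). First I would force the point-transformation part of the three relations to coincide: set $\Theta_{r_1}=\Theta_{r_2}=\Theta_{r_3}$ and $\mathbf{b}_{r_1}=\mathbf{b}_{r_2}=\mathbf{b}_{r_3}$, so that for every head $e_h$ the three query boxes $\mathcal{B}_{r_1}(\mathbf{e}_h)$, $\mathcal{B}_{r_2}(\mathbf{e}_h)$, $\mathcal{B}_{r_3}(\mathbf{e}_h)$ are all centered at the common point $\mathbf{c}=\mathcal{H}(\mathbf{e}_h)$ and differ only in their half-width vectors $\sigma(\boldsymbol\delta_{r_1})$, $\sigma(\boldsymbol\delta_{r_2})$, $\sigma(\boldsymbol\delta_{r_3})$.

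Next I would translate the rule into geometry exactly as in the earlier lemmas: $(e_h,r_i,e_t)$ is modeled by $\mathbf{e}_t \in \mathcal{B}_{r_i}(\mathbf{e}_h)$, so $r_1 \wedge r_2 \rightarrow r_3$ is guaranteed precisely when $\mathcal{B}_{r_1}(\mathbf{e}_h) \cap \mathcal{B}_{r_2}(\mathbf{e}_h) \subseteq \mathcal{B}_{r_3}(\mathbf{e}_h)$ for all $\mathbf{e}_h$. Because the two boxes on the left are concentric and axis-parallel, their intersection is itself the axis-parallel box with the same center $\mathbf{c}$ and half-widths $\min\bigl(\sigma(\boldsymbol\delta_{r_1}),\sigma(\boldsymbol\delta_{r_2})\bigr)$ taken element-wise — this is just the $\mathcal{I}(\cdot)$ operator from the scoring section specialized to concentric boxes, with the empty-box case handled automatically. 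Hence the containment reduces to the element-wise inequality $\sigma(\boldsymbol\delta_{r_3}) \geq \min\bigl(\sigma(\boldsymbol\delta_{r_1}),\sigma(\boldsymbol\delta_{r_2})\bigr)$.

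Finally I would note that this inequality is always satisfiable: $\sigma$ maps $\mathbb{R}$ bijectively onto $(0,1)$, so for any $\boldsymbol\delta_{r_1},\boldsymbol\delta_{r_2}$ one can pick $\boldsymbol\delta_{r_3}$ componentwise — e.g.\ taking equality, which makes $r_3$ realize the intersection box exactly (so also $r_1 \rightarrow r_3$ and $r_2 \rightarrow r_3$, consistent with the remark following the qualifier-intersection definition), or any larger offset, which still preserves the rule. This shows ShrinkE can model relation intersection and completes Proposition~\ref{prop:triple_inference_pattern}.

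The main obstacle — essentially the only place care is needed — is the concentricity step: for boxes with different centers the intersection, though still axis-parallel, need not be expressible as a single relation's spanned box around $\mathcal{H}(\mathbf{e}_h)$, which would break the clean reduction to an inequality on offsets. Forcing $\Theta$ and $\mathbf{b}$ to agree across $r_1,r_2,r_3$ side-steps this at the (harmless, for an existence claim) cost of a shared point-transformation; since the statement only asserts representability of this single pattern in isolation, no simultaneous compatibility with other patterns needs to be argued.
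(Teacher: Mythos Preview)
Your proof is correct and follows essentially the same approach as the paper, which dispatches the lemma in two sentences by observing that boxes are closed under intersection and that the pattern reduces to a pair of relation implications handled as in Lemma~\ref{lem:hierarchy}; your version is simply a fully worked-out instantiation of that sketch, with the concentric-box setup and the explicit offset inequality making precise what the paper leaves implicit. One small slip in your parenthetical aside: with the equality choice $\sigma(\boldsymbol\delta_{r_3})=\min\bigl(\sigma(\boldsymbol\delta_{r_1}),\sigma(\boldsymbol\delta_{r_2})\bigr)$ the box for $r_3$ is the \emph{smallest} of the three concentric boxes, so what you actually obtain is $r_3\rightarrow r_1$ and $r_3\rightarrow r_2$ rather than $r_1\rightarrow r_3$ and $r_2\rightarrow r_3$ --- but this does not affect the main argument.
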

\begin{proof}
Note that box is closed under intersection and this property can be view as a combination of two pairs of relation implication. Hence, the proof is similar to the proof of Lemma \ref{lem:hierarchy}. 
\end{proof}

\begin{proposition}
ShrinkE is able to infer qualifier implication, mutual exclusion, and intersection.
\end{proposition}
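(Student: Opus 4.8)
The plan is to realise each of the three patterns by a single choice of the qualifier shrinking vectors (equivalently, of the MLP producing $\mathbf{s}_{r,k,v},\mathbf{S}_{r,k,v}$) that forces a prescribed spatial relationship between the qualifier boxes \emph{uniformly over every primal query box}, and then to transfer that relationship to the scoring function $f$ using only the facts that (i) the target box of $(\mathcal{T},\mathcal{Q})$ is the intersection $\bigcap_{q\in\mathcal{Q}}\mathcal{S}_q(\mathcal{B}_r(\mathbf{e}_h))$ of the individual shrunken primal boxes, and (ii) $f$ depends on the fact only through this target box, with an empty target box making the fact unsatisfiable (no tail point lies inside it). Write $\Box_q:=\mathcal{S}_{r,k,v}(\mathcal{B}_r(\mathbf{e}_h))$ for the box produced by a qualifier $q=(k{:}v)$; since the primal spanning uses the strictly positive function $\tau$, the side length vector $\mathbf{L}=\mathbf{M}-\mathbf{m}$ of $\mathcal{B}_r(\mathbf{e}_h)$ has all entries positive, and $\Box_q=\Box(\mathbf{m}+\sigma(\mathbf{s}_{r,k,v})\odot\mathbf{L},\ \mathbf{M}-\sigma(\mathbf{S}_{r,k,v})\odot\mathbf{L})$. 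The key observation driving all three arguments is that, because the shift is a \emph{fraction} $\sigma(\cdot)$ of the side length, subsumption, disjointness and intersection-equality between two such boxes are equivalent to componentwise inequalities/equalities among the sigmoid-transformed vectors $\sigma(\mathbf{s}),\sigma(\mathbf{S})$, which are independent of $\mathbf{m},\mathbf{M}$ and therefore hold for all heads at once.

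\textbf{Implication and mutual exclusion.} For $q_i\to q_j$ I would choose the shrinking vectors so that $\sigma(\mathbf{s}_{r,k_i,v_i})\ge\sigma(\mathbf{s}_{r,k_j,v_j})$ and $\sigma(\mathbf{S}_{r,k_i,v_i})\ge\sigma(\mathbf{S}_{r,k_j,v_j})$ componentwise for every primal relation $r$; then $\Box_{q_i}\subseteq\Box_{q_j}$ always, so $\Box_{q_i}\cap\Box_{q_j}=\Box_{q_i}$ and, by $A\subseteq B\Rightarrow A\cap C=A\cap B\cap C$, the target box of $(\mathcal{T},\mathcal{Q}\cup\{q_i\})$ equals that of $(\mathcal{T},\mathcal{Q}\cup\{q_i,q_j\})$; hence $f$ and therefore the truth value agree, both in the ``true'' and in the ``false'' case. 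For $q_i\wedge q_j\to\bot$ I would instead force a single coordinate $\ell$ with $\sigma(\mathbf{S}_{r,k_i,v_i})_\ell+\sigma(\mathbf{s}_{r,k_j,v_j})_\ell>1$ (possible because $\sigma$ takes all values in $(0,1)$, e.g.\ both above $\tfrac{1}{2}$); a short computation using $\mathbf{M}_\ell-\mathbf{m}_\ell=\mathbf{L}_\ell>0$ shows that the $\ell$-th interval of $\Box_{q_i}$ lies strictly below that of $\Box_{q_j}$, so $\Box_{q_i}\cap\Box_{q_j}=\emptyset$ for every head; the target box of any $(\mathcal{T},\mathcal{Q}\cup\{q_i,q_j\})$ is then empty and the fact is false.

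\textbf{Intersection.} Here I would use that $\Box_{q_i}\cap\Box_{q_j}$ is again a box, with lower corner $\mathbf{m}+\max(\sigma(\mathbf{s}_i),\sigma(\mathbf{s}_j))\odot\mathbf{L}$ and upper corner $\mathbf{M}-\max(\sigma(\mathbf{S}_i),\sigma(\mathbf{S}_j))\odot\mathbf{L}$ (the maxima taken componentwise, legitimate since $\mathbf{L}\ge\mathbf{0}$). As $\sigma$ is a bijection onto $(0,1)$ and the componentwise maximum of $(0,1)$-vectors stays in $(0,1)$, one can set $\sigma(\mathbf{s}_{r,k_k,v_k})=\max(\sigma(\mathbf{s}_i),\sigma(\mathbf{s}_j))$ and $\sigma(\mathbf{S}_{r,k_k,v_k})=\max(\sigma(\mathbf{S}_i),\sigma(\mathbf{S}_j))$, making $\Box_{q_k}=\Box_{q_i}\cap\Box_{q_j}$ for every head. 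Then the target box of $(\mathcal{T},\mathcal{Q}\cup\{q_i,q_j\})$ equals that of $(\mathcal{T},\mathcal{Q}\cup\{q_k\})$, so $f$ and the truth value coincide; the identical argument with an $n$-fold maximum proves the generalisation $q_1\wedge\cdots\wedge q_n\to q_k$, and as noted above it subsumes the two implications $q_i\to q_k$ and $q_j\to q_k$.

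The part I expect to require the most care is the \emph{uniformity}: every pattern is asserted for all primal triples $\mathcal{T}$ and all qualifier sets $\mathcal{Q}$, so one must check that the chosen box relation is genuinely insensitive to the source box $\mathcal{B}_r(\mathbf{e}_h)$ — which is exactly what the fractional form of $\mathcal{S}$ buys us — and that it survives intersection with the remaining boxes $\{\Box_q:q\in\mathcal{Q}\}$, which follows from monotonicity of intersection, from $A\subseteq B\Rightarrow A\cap C=A\cap B\cap C$, and from the fact that an empty factor annihilates the whole intersection. A minor book-keeping point is to argue that existence of MLP parameters realising the required (finitely many) componentwise inequalities is unproblematic, exactly in the spirit of the earlier propositions, which likewise only assert suitable parameter choices for the translation and rotation components.
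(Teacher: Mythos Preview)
Your proposal is correct and follows the same geometric route as the paper—realising qualifier implication via box containment, mutual exclusion via box disjointness, and intersection via the intersection of qualifier boxes. Your argument is in fact considerably more detailed than the paper's three-sentence sketch: you make explicit the parameter choices on $\sigma(\mathbf{s}),\sigma(\mathbf{S})$, you exploit the fractional form of the shrinking to obtain uniformity over all primal boxes $\mathcal{B}_r(\mathbf{e}_h)$, and you verify closure under intersection with the remaining qualifiers in $\mathcal{Q}$, none of which the paper spells out.
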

\begin{proof}
Since each qualifier is associated with a box, the implication and mutual exclusion relationships between qualifiers can be modeled by their geometric relationships, i.e., box entailment and box disjointedness, respectively, between their corresponding boxes.  
Qualifier intersection can be modeled by enforcing the box of one qualifier to be inside the intersection of the boxes of another two qualifiers.  
\end{proof}

\end{document}